\documentclass[11pt]{article}

\usepackage[margin=1in]{geometry}

\usepackage{graphicx}%
\usepackage{multirow}%
\usepackage{amsmath,amssymb,amsfonts}%
\usepackage{amsthm}%
\usepackage{stfloats}
\usepackage{mathrsfs}%
\usepackage[title]{appendix}%
\usepackage{xcolor}%
\usepackage{textcomp}%
\usepackage{manyfoot}%
\usepackage{booktabs}%
\usepackage{algorithm}%
\usepackage{algorithmic}
\usepackage{listings}%
\usepackage{comment}
\usepackage{bm}
\usepackage{subfig}
\usepackage{hyperref}

\usepackage[sort]{natbib}

\usepackage{mathfont}
\usepackage{stmaryrd}
\usepackage[english]{babel}
\usepackage{graphicx}
\usepackage{textcomp}
\usepackage{multirow}
\usepackage{multicol}
\usepackage{bm}
\usepackage{mathtools}
\usepackage[makeroom]{cancel}
\usepackage{booktabs}       

\newcommand{\IfElse}[3]{
   \textbf{if}\ #1\ \textbf{then}\ #2\ \textbf{else}\ #3}
\newcommand{\red}[1]{{\color{red}{#1}}}

\DeclareMathOperator*{\argmin}{arg\,min}
\newcommand{\defeq}{\vcentcolon=}
\newcommand\norm[1]{\left\lVert#1\right\rVert}
\newcommand{\innerproduct}[2]{\langle #1, #2 \rangle}
\def\*#1{\mathbf{#1}}
\def\BibTeX{{\rm B\kern-.05em{\sc i\kern-.025em b}\kern-.08em
    T\kern-.1667em\lower.7ex\hbox{E}\kern-.125emX}}
\newcommand{\expnum}[2]{{#1}\mathrm{e}{#2}}


\theoremstyle{thmstyleone}%
\newtheorem{theorem}{Theorem}
%

\theoremstyle{thmstyletwo}%

\theoremstyle{thmstylethree}%
\newtheorem{definition}{Definition}%
\newtheorem{lemma}{Lemma}%
\raggedbottom

\title{LAMA-Net: A Convergent Network Architecture for Dual-Domain Reconstruction}

\author{Chi Ding\thanks{Department of Mathematics, University of Florida, Gainesville, FL 32601, USA. Email: \texttt{ding.chi@ufl.edu}} 
\and Qingchao Zhang\thanks{Department of Mathematics, University of Florida, Gainesville, FL 32601, USA. Email: \texttt{qingchaozhang3@gmail.com}} 
\and Ge Wang\thanks{Biomedical Imaging Center, Rensselaer Polytechnic Institute, Troy, NY 12180, USA. Email: \texttt{wangg6@rpi.edu}} 
\and Xiaojing Ye\thanks{Department of Mathematics and Statistics, Georgia State University, Atlanta, GA 30303, USA. Email: \texttt{xye@gsu.edu}} 
\and Yunmei Chen\thanks{Department of Mathematics, University of Florida, Gainesville, FL 32601, USA. Email: \texttt{yun@ufl.edu}}}

\date{}

\begin{document}

\maketitle

\begin{abstract}
We propose a learnable variational model that learns the features and leverages complementary information from both image and measurement domains for image reconstruction. In particular, we introduce a learned alternating minimization algorithm (LAMA) from our prior work, which tackles two-block nonconvex and nonsmooth optimization problems by incorporating a residual learning architecture in a proximal alternating framework. In this work, our goal is to provide a complete and rigorous convergence proof of LAMA and show that all accumulation points of a specified subsequence of LAMA must be Clarke stationary points of the problem. LAMA directly yields a highly interpretable neural network architecture called LAMA-Net. Notably, in addition to the results shown in our prior work, we demonstrate that the convergence property of LAMA yields outstanding stability and robustness of LAMA-Net in this work. We also show that the performance of LAMA-Net can be further improved by integrating a properly designed network that generates suitable initials, which we call iLAMA-Net. To evaluate LAMA-Net/iLAMA-Net, we conduct several experiments and compare them with several state-of-the-art methods on popular benchmark datasets for Sparse-View Computed Tomography.

\bigskip

\noindent
\textbf{Keyword.} {Inverse problems; Alternating minimization; Convergence; Neural networks; Stability; Interpretability; Computed tomography.}

\end{abstract}


\section{Introduction}\label{sec1}

Deep learning (DL) has transformed solution approaches in various fields, including computer vision and image processing, by efficiently learning features from training data and applying them to solve relevant problems in the past decades. Recently, DL techniques for tomographic image reconstruction have received significant attention. 
While most DL-based methods apply deep neural networks (DDNs), which is the core of DL, to the image domain for noise and artificial removal, there are limited works that leverage the power of DNNs in both image and measurement domains to improve solution quality in the literature. The latter method is to simultaneously recover image and full data, and thus naturally yields the problem called dual-domain image reconstruction.

In our prior work \cite{b2}, we introduced a deep network architecture called LAMA-Net for solving dual-domain image reconstruction. This network is naturally induced by the learnable alternating minimization algorithm (LAMA) developed in that work. In the present work, we provide a complete and rigorous mathematical proof of the convergence of LAMA.
More precisely: (i) We prove an important convergence property showing that any accumulation point of a specified subsequence of LAMA is a Clarke stationary point of the variational problem of the dual-domain reconstruction problem;
(ii) We show that the convergence property results in significantly improved efficiency, stability, and robustness of LAMA-Net. These improvements are justified by a number of new experiments and perturbated data sets; and (iii) We show that the performance of LAMA-Net can be further improved by using a properly designed network which generates suitable initials.

To demonstrate our main ideas, we focus on medical image reconstruction applications, particularly sparse-view computed tomography (SVCT) reconstruction. As an example, we also construct an initialization network (Init-Net), combined with LAMA-Net, to form an iLAMA-Net for solving SVCT. However, it is worth remarking that our methodology in LAMA-Net can be applied to general dual-domain reconstruction problems, and the Init-Net can be designed specifically for other applications.

The remainder of this paper is organized as follows. We present a review of recent work on image reconstruction in Section \ref{sec:relatedwork}. We provide a background of LAMA and LAMA-Net including their motivation and architecture in \ref{sec:background}. In Section \ref{sec:convergenceproof}, we conduct a complete convergence analysis of LAMA. Numerical experiments and results are given in Section \ref{sec:results}. Finally, Section \ref{sec:conclusion} concludes the paper.

\section{Related Work}
\label{sec:relatedwork}

\subsection{Deep learning in image reconstruction}

In recent years, DL has emerged as a powerful tool for image reconstruction. At the heart of DL are DNNs, a class of nonlinear parametric functions that can approximate highly complex functions with high accuracy \cite{b8,b9,b10}. In practice, DL-based methods use DNNs as surrogates of the functions of interest and learn the parameters of these DNNs using training data. The generalized data-driven learning framework enables DL to learn powerful high-level features from data for the target task. Thus, DL has produced promising empirical results in various applications.

More recently, unrolling deep networks \cite{b11,b12,b13,b14,b15} have become a major type of network architecture for image reconstruction. The design of an unrolling network is often inspired by some iterative optimization scheme and mimics the latter by treating one iteration as one layer (also called phase) of the network, i.e., the structure of the layer follows the computations in the iteration. But certain term(s) in that iteration, e.g., the proximal mapping, is replaced by a neural network with trainable parameters. The output of the unrolling network is set to be the result after a prescribed number of iterations (layers), and the parameters are trained by minimizing the error of the output to the ground truth image. 
The name ``unrolling'' thus refers to unfolding the iterative optimization scheme to form an end-to-end network. Optimization algorithms leveraging information in image domain \cite{b8,b13,b16,b17,b19,b20}, sinogram (data) domain \cite{b5,b22,b23}, and both domains \cite{b14,b24,b25,b26} have been proposed. Among these methods, dual-domain-based networks have been shown to outperform the others due to the advantage of adaptive feature selection and dynamic interactions in both domains \cite{b24,b25}. 
Therefore, alternating minimization (AM), also known as block coordinate descent (BCD), and similar algorithms become natural choices for reconstruction problems involving two domains. These algorithms are also designed to tackle optimization problems with two variables. 

Recent advances in dual-domain methodologies have demonstrated the value of cross-domain interactions in inverse problems. For instance, the Omni-Kernel Network \cite{liu2023omni} and Selective Frequency Network \cite{lin2023selective} propose innovative architectures for natural image restoration, leveraging frequency-aware mechanisms and kernel adaptation to enhance texture recovery. While these methods operate in photographic image domains, they highlight the broader potential of learned multi-domain priors—a perspective aligned with our dual-domain CT framework. Separately, Lukić and Balázs \cite{lukic2024moment, lukic2022limited, lukic2016binary} contribute rigorous model-based approaches to tomographic reconstruction, particularly for binary or moment-preserving scenarios. Their work on shape priors, centroid-driven reconstruction \cite{lukic2022limited} and geometric moment constraints \cite{lukic2024moment} offers valuable insights into non-data-driven regularization strategies, contrasting with our data-driven learned alternating minimization algorithm. While their focus on analytic shape descriptors differs fundamentally from LAMA’s deep learning integration, these works collectively underscore the importance of domain-specific prior assumptions in tomography, whether derived from mathematical principles or learned feature representations.
\subsection{Alternating Minimization and related unrolling networks}

AM and alike, including the Alternating Direction Method of Multipliers (ADMM) in a broad sense, have been well studied for multi-variate optimization, but mainly for convex problems, where the objective functions are jointly convex with respect to the variables. For non-convex problems, the proximal alternating linearized minimization (PALM) algorithm \cite{b27} is developed where the algorithm alternates between the steps of the proximal forward-backward scheme. A follow-up work \cite{b28} proposed an inertial version of the PALM (iPALM) algorithm motivated by the Heavy Ball method of Polyak \cite{b29}. It is also closely related to multi-step algorithms of Nesterov's accelerated gradient method \cite{b30}.
The global convergence to a stationary point is obtained for PALM and iPALM relying on the Kurdyka-\L{}ojasiewicz (KL) property. In addition, these two algorithms assume the joint term for two blocks to be smooth (possibly nonconvex), and the proximal points associated with these terms are trivial to obtain.

\subsection{Convergence and stability of unrolling networks}

Despite numerous empirical successes, unrolling methods also give rise to many critical concerns. First, these methods only superficially mimic existing iterative optimization schemes but lose the theoretical integrity of the regularization model. Second, they do not correspond to any specified objective function and, thus, do not possess any theoretical justifications or convergence guarantees under any specific optimality. 

In recent years, there has been a surge of interest in constructing convergent unrolling networks \cite{b34}.
There are mainly two types of network convergence considered in the literature: convergence in objective function value \cite{b15} and in fixed-point iteration (FPI) \cite{b35,b36,b37,b38}. Convergence of FPI leverages a fixed-point mapping parameterized as a deep network whose parameters $\theta$ can be learned from data.
Convergence can be established if the neural network's spectral radius falls in $[0,1)$ for any input imaging data, ensuring convergence due to the network's non-expansiveness. 
However, such an assumption may result in a large spectral radius and, thus, slow convergence in some medical imaging problems. In a specific class of FPI networks, training requires solving FPI and inverting operators using the Neumann series, which can be slow and unstable. Moreover, FPI networks do not link the limit points to any objective function, making it difficult to interpret the optimality of the solutions as in traditional variational methods in mathematical imaging. Therefore, our goal here is to construct a network that converges in function value. The details of this network architecture and its convergence property will be presented in the next sections.

\section{Background}
\label{sec:background}

\subsection{Learnable Dual-Domain Minimization Model}

\label{sec:model}

We first recall a dual-domain reconstruction model from our prior work \cite{b2}. 
Let $(\xbf,\zbf)$ be a pair of images and full data to be reconstructed, and $\*s_0$ is the given partially observed data. For example, $\xbf$ can be a CT image $\zbf$ is its corresponding full sinogram data, and $\*s_0$ is acquired sparse-view sinogram data. Then \cite{b2} considers the following minimization problem to recover $(\xbf,\zbf)$ from $\*s_0$:
\begin{equation}
\label{eq:OrgPhi}
    \min_{\xbf,\*z}\,\Phi(\xbf,\*z;\*s_0,\Theta)\defeq f(\xbf,\*z;\*s_0)+R(\xbf;\theta_1)+Q(\*z;\theta_2),
\end{equation}
where the data fidelity term $f$ is modeled by
\begin{equation}
\label{eq:datafidelity}
    f(\xbf,\*z;\*s_0)=\frac 1 2 \norm{\*{Ax}-\*z}^2 + \frac \lambda 2 \norm{\*P_{0}\*z - \*s_0}^2.
\end{equation}
For ease of interpretation, $\xbf$, $\zbf$, $\*s_0$ are all column vectors by stacking their original form (which are usually matrices) vertically.
Here the matrix $\Abf$ describes the physics of the transformation from image to data, for example, the Radon transform in CT applications, and $\*P_0 \in \mathbb{R}^{m\times n}$ ($m > n$) is a binary matrix that indicates how to select the specific measurement locations of the full data $\zbf$ to form the partial data $\*s_0$.
The term $f$ in \eqref{eq:datafidelity} is easy to understand: we want both consistencies between the image $\xbf$ and its full data $\zbf$ as well as between the full data $\zbf$ and its partially observed version $\*s_0$.

In \eqref{eq:OrgPhi}, there are two important parameters $\Theta:=(\theta_1,\theta_2)$ which are to be learned from data instead of being determined manually in most of the literature. 
We further note that $R(\xbf;\theta_1)$ and $Q(\*z;\theta_2)$ are the regularization terms in image and data domains, respectively. They are defined as follows:
\begin{subequations}
\label{eq:21norm}
\begin{eqnarray}
    R(\xbf, \theta_1)=\norm{\*g^R(\xbf,\theta_1)}_{2,1}&\defeq\sum_{i=1}^{m_R} \norm{\*g^R_i(\xbf,\theta_1)},\\
    Q(\*z,\theta_2)=\norm{\*g^Q(\*z, \theta_2)}_{2,1}&\defeq\sum_{j=1}^{m_Q} \norm{\*g^Q_j(\*z,\theta_2)}.
\end{eqnarray}
\end{subequations}
The $(2,1)$-norm is nonsmooth since the 2-norm of $g_i(x)$, given by $\|g_i(x)\|_2$, is not differentiable at $g_i(x) = 0$ because the gradient involves division by $\|g_i(x)\|_2$, which becomes undefined when $g_i(x) = 0$. We note that $\*g^R$ and $\*g^Q$ will be treated similarly, so we consider both in the same way as follows.
Let $r\in\{R,Q\}$, we have $\*g^r(\cdot)\in\mathbb{R}^{m_r\times d_r}$, where $d_r$ is the number of channels, and each channel is a feature map with size $\sqrt{m_r}\times\sqrt{m_r}$. Note $\*g^r_i(\cdot) \in \mathbb{R}^{d_r}$ is the vector at position $i$ across all channels. We apply a vanilla CNN as the feature extractor $\*g^r(\cdot)$:
\begin{equation}
    \label{eq:g_arch}
    \*g^r(\*y) = \*w_l * a \cdots a(\*w_2 * a(\*w_1 * \*y))
\end{equation}
where $\*w_i$ denotes the convolution parameters for each layer, and there are $l$ layers in total. $a(\cdot)$ is a smoothed version of the Rectified Linear Unit (ReLU) activation function, where ReLU$(x)=\max (0, x)$, and $\*y$ is the network input. The choice of the architecture of $\*g^r$ is rather flexible, but CNNs are generally considered more effective and lightweight than other types of DNNs for imaging problems. The choice of $(2,1)$-norm is to promote sparsity in order to pick up the most significant features extracted by $\*g^r$.
In \cite{b2}, a main contribution is to learn $ \*g^r(\*y)$ from data based on a novel learned alternating minimization algorithm (LAMA), or more precisely, its network form called LAMA-Net, to overcome drawbacks of manually designed regularization such as total-variation (TV) in dual-domain reconstruction.

\subsection{Learned Alternating Minimization Algorithm}
\label{sec:lama}

We note that \eqref{eq:OrgPhi} is a nonconvex nonsmooth minimization problem with respect to $\mathbf{x}$ and $\mathbf{z}$. This is a significant challenge to be overcome by the learned alternating minimization algorithm (LAMA) proposed in \cite{b2}.
Before we provide a complete and rigorous proof of the convergence of LAMA in Section \ref{sec:convergenceproof}, which is the main goal of this paper, we first quickly review the structure of LAMA.
Since $\Theta$ is fixed In the development of LAMA, we drop $\Theta$ for notation simplicity.

The first stage of LAMA aims to mollify the nonconvex and nonsmooth objective function in \eqref{eq:OrgPhi} to a nonconvex but smooth objective function by using the smoothing procedure \cite{b15,b40}: 
\begin{equation}
    \label{eq:smoothedR}
    \begin{split}
        r_\varepsilon(\*y) = \sum_{i=1}^{m_r} r_{\varepsilon,i}(\*y)=:
        \sum_{i\in I^r_0}\frac{1}{2\varepsilon}\norm{\*g^r_i(\*y)}^2 + \sum_{i\in I_1^{r}}\left( \norm{\*g^r_i(\*y)}-\frac \varepsilon 2\right),\quad\ybf \in Y,
    \end{split}
\end{equation}
where $(r,\*y)$ represents either $(R, \xbf)$ or $(Q, \*z)$, $Y:=\mathbb{R}^{m_r \times d_r}$, and
\begin{equation}
    I_0^r=\{i\in \left[m_r\right]\,\vert\norm{\*g^r_i}\leq \varepsilon\},\quad I_1^r = \left[m_r\right] \setminus I_0^r.
\end{equation}
It is easy to show:
\begin{equation}
    \label{eq:gradsmoothedR}
    \begin{split}
        \nabla r_\varepsilon(\*y) =\sum_{i\in I^r_0}\nabla \*g_i^r(\*y)^\top\frac{\*g^r_i(\*y)}{\varepsilon} + \sum_{i\in I_1^{r}}\nabla\*g^r_i(\*y)^\top\frac{\*g^r_i(\*y)}{\norm{\*g^r_i(\*y)}}, \quad \ybf \in Y.
    \end{split}
\end{equation}

The second stage of LAMA is to solve the smoothed nonconvex problem with the fixed smoothing factor $\varepsilon$, i.e.,
\begin{equation}
 \label{SM}   
\min_{\xbf, \*z}\{\Phi_{\varepsilon}(\xbf, \*z):=f(\xbf,\*z)+R_{\varepsilon}(\xbf)+Q_{\varepsilon}(\*z)\}.
\end{equation}
Despite the parameter $\varepsilon$ being fixed, which makes \eqref{eq:OrgPhi} and \eqref{SM} different, the next stage of LAMA has a mechanism to automatically decrease $\varepsilon$ so the solution to \eqref{SM} will converge to the original problem \eqref{eq:OrgPhi}. This will be proved rigorously in Section \ref{sec:convergenceproof}.
For now, we are still in the second stage and assume $\varepsilon = \varepsilon_k$ for some iteration $k$ of LAMA, and then we adopt the idea of the proximal alternating linearized minimization (PALM) \cite{b27}:
\begin{subequations}
    \begin{align}
    \label{eq:zgradstep}
    \*b_{k+1} &=\*z_k-\alpha_k \nabla_{\*z} f(\xbf_k,\*z_k)\\
    \label{eq:ylinprox}
    \*u_{k+1}^{\*z} &=\argmin _{\*u}  \frac{1}{2\alpha_k}\norm{\*u-\*b_{k+1}}^2+Q_{\varepsilon_k}(\*u)
    \end{align}
\end{subequations}
and
\begin{subequations}
    \begin{align}
    \label{eq:ck1y}
        \*c_{k+1} &=\xbf_k-\beta_k \nabla_{\xbf} f(\xbf_k,\*u^{\*z}_{k+1})\\
        \*u_{k+1}^{\xbf} &= \argmin_{\*u} \frac{1}{2\beta_k}\norm{\*u-\*c_{k+1}}^2+R_{\varepsilon_k}(\*u)
    \end{align}
\end{subequations}
where $\alpha_k$ and $\beta_k$ are the corresponding step sizes. 

Since the proximal point $\*u^{\xbf}_{k+1}$ and $\*u^{\*z}_{k+1}$ are difficult to compute in general due to the complicated structures of $Q_{\varepsilon_k}(\*u)$ and $R_{\varepsilon_k}(\*u)$, we approximate them by their linear approximations at $\*b_{k+1}$ and $\*c_{k+1}$ respectively, i.e., $Q_{\varepsilon_k}(\*b_{k+1})+ \innerproduct{\nabla Q_{\varepsilon_k}(\*b_{k+1})}{ \*u-\*b_{k+1}}$ and $R_{\varepsilon_k}(\*c_{k+1})+ \innerproduct{\nabla R_{\varepsilon_k}(\*c_{k+1})}{ \*u-\*c_{k+1}}$, plus simple quadratic terms $\frac{1}{2p_k} \norm{\*u-\*b_{k+1}}^2$ and $\frac{1}{2q_k} \norm{\*u-\*c_{k+1}}^2$. Then by a simple computation, $\*u^{\xbf}_{k+1}$ and $\*u^{\*z}_{k+1}$ are now determined as follows:
\begin{equation}
\label{eq:u}
        \*u^{\*z}_{k+1} = \*b_{k+1}-\hat{\alpha}_k\nabla Q_{\varepsilon_k}(\*b_{k+1}), \quad
        \*u^{\xbf}_{k+1} = \*c_{k+1}-\hat\beta_k\nabla R_{\varepsilon_k}(\*c_{k+1}),
\end{equation}
where $\hat\alpha_k = \frac{\alpha_k p_k}{\alpha_k + p_k}$, $\hat\beta_k = \frac{\beta_k q_k}{\beta_k + q_k}$. In the training of $\Theta$, we can also include $\alpha_k$, $\hat\alpha_k$, $\beta_k$, and $\hat\beta_k$ for training so they can act effectively in LAMA-Net, which usually runs for several steps in practice. However, to ensure convergence of LAMA, we need certain conditions on these steps, which will be specified later. 

Indeed, we note that even the convergence of the sequence $\{(\*u^{\*z}_{k+1}, \*u^{\xbf}_{k+1})\}_k$ is not guaranteed now. 
To ensure their convergence, we propose the following accept-or-reject procedure: if $(\*u^{\*z}_{k+1}, \*u^{\xbf}_{k+1})$ satisfy the following sufficient descent conditions (SDC):
\begin{subequations}
\label{eq:uvcond}
    \begin{align}
    \label{eq:uvconda}
        \Phi_{\varepsilon_k}(\*u^{\xbf}_{k+1},\*u^{\*z}_{k+1})-\Phi_{\varepsilon_k}(\xbf_k,\*z_k) &\leq 
         -\eta \left(\norm{\*u^{\xbf}_{k+1}-\xbf_k}^2+\norm{\*u^{\*z}_{k+1}-\*z_k}^2\right) \\
    \label{eq:uvcondb}
        \norm{\nabla_{\xbf,\zbf} \Phi_{\varepsilon_k}(\xbf_{k},\*z_{k})}&\leq \frac{1}{\eta} \left(\norm{\*u^{\xbf}_{k+1}-\xbf_k}+\norm{\*u^{\*z}_{k+1}-\*z_k}\right)
    \end{align}
\end{subequations}
for some $\eta>0$, we can accept $\xbf_{k+1}=\*u_{k+1}^{\xbf}$ and $\*z_{k+1}=\*u_{k+1}^{\*z} $. 
If any of \eqref{eq:uvconda} and \eqref{eq:uvcondb} is violated, we compute $(\*v^{\*z}_{k+1},\*v^{\xbf}_{k+1})$ by the standard alternating minimization, also known as the Block Coordinate Descent (BCD) algorithm, with a simple line-search strategy to safeguard the convergence: Let $\bar \alpha,\bar \beta$ be positive numbers in $(0,1)$
compute
\begin{align} \label{v1}
  &  \*v^{\*z}_{k+1}  =  \*z_k-\bar \alpha\left( \nabla_{\*z} f(\xbf_k,\*z_k)+\nabla Q_{\varepsilon_k}(\*z_k)\right),\\\label{v2}
 &   \*v_{k+1}^{\xbf}
     =\xbf_k-\bar \beta\left( \nabla_{\xbf} f(\xbf_k,\*v_{k+1}^{\*z})+\nabla R_{\varepsilon_k}(\xbf_k)\right).
\end{align}
Set $\xbf_{k+1}=\*v_{k+1}^{\xbf}$ and $\*z_{k+1}=\*v_{k+1}^{\*z}$, if for some $\delta\in (0,1)$, the following holds:
\begin{equation}
\label{v-condition-6}
\begin{split}
    \Phi_{\varepsilon}(\*v_{k+1}^{\xbf},\*v_{k+1}^{\*z})-\Phi_{\varepsilon}(\xbf_k,\*z_k)
\leq -\delta (\norm{\*v_{k+1}^{\xbf}-\xbf_k}^2+\norm{\*v_{k+1}^{\*z}-\*z_k}^2).
\end{split}
\end{equation}
Otherwise we reduce $(\bar \alpha,\bar \beta)\leftarrow \rho (\bar \alpha,\bar \beta)$ where $0<\rho<1$, and recompute $(\*v^{\xbf}_{k+1},\*v^{\*z}_{k+1})$ until the condition \eqref{v-condition-6} holds. 
We will show that \eqref{v-condition-6} will be achieved within finitely many steps later. Meanwhile, for the fixed $\varepsilon$, we will show that $\norm{\nabla\Phi_{\varepsilon}(\xbf_k,\zbf_k)}\to 0$ as $k\to\infty$. Overall, LAMA has a mechanism to decrease $\varepsilon$, and thus, what we described in this second stage can be viewed as an inner loop procedure for any particular $\varepsilon$.

The third stage of LAMA checks the value of $\norm{\nabla\Phi_{\varepsilon}(\xbf_k,\zbf_k)}$.  If it is small enough, then we repeat the second stage with a reduced smoothing factor $\gamma\varepsilon$ ($0<\gamma<1$). Otherwise, keep $\varepsilon$ unchanged. This yields a gradually decreasing sequence of $\varepsilon$, and we will obtain a subsequence of the iterates whose accumulation points are all Clarke stationary points (see Definition~\ref{def-clark-gen}) of \eqref{eq:OrgPhi}. LAMA \cite{b2} is summarized in Algorithm \ref{alg:LAMA}.

\begin{algorithm}[htb]
\caption{Linearized Alternating Minimization Algorithm \cite{b2}}
\textbf{Input:} Initializations: $\xbf_0$, $\*z_0$, $\delta$, $\eta$, $\rho$, $\gamma$, $\varepsilon_0$, $\sigma$, $\lambda$
\begin{algorithmic}[1]
    \FOR{$k=0,\,1,\,2,\,...$}
        \STATE $\*b_{k+1} =\*z_k-\alpha_k \nabla_{\*z} f(\xbf_k,\*z_k),$\label{alg:begin}
        \STATE $\*u^{\*z}_{k+1} = \*b_{k+1}-\hat{\alpha}_k\nabla Q_{\varepsilon_k}(\*b_{k+1})$
        \STATE$\*c_{k+1} =\xbf_k-\beta_k \nabla_{\xbf }f(\xbf_k,\*u^{\*z}_{k+1})$ \label{alg:c}
        \STATE $\*u^{\xbf}_{k+1} = \*c_{k+1}-\hat\beta_k\nabla R_{\varepsilon_k}(\*c_{k+1})$
        \IF{\eqref{eq:uvcond} holds}
            \STATE{$(\xbf_{k+1}, \*z_{k+1})\leftarrow (\*u^{\xbf}_{k+1}, \*u^{\*z}_{k+1})$}
        \ELSE
            \STATE{$\*v^{\*z}_{k+1}=\*z_k-\bar\alpha\left[ \nabla_{\*z}f(\xbf_k,\*z_k)+ \nabla Q_{\varepsilon_k}(\*z_{k})\right]$}\label{alg:v}
            \STATE{$\*v^{\xbf}_{k+1}=\xbf_k-\bar\beta \left[\nabla_{\xbf} f(\xbf_k,\*v^{\*z}_{k+1})+\nabla  R_{\varepsilon_k}(\xbf_k)\right]$}
            \STATE{\IfElse{\eqref{v-condition-6}}{$(\xbf_{k+1}, \*z_{k+1})\leftarrow (\*v^{\xbf}_{k+1}, \*v^{\*z}_{k+1})$}{$(\bar\beta, \bar\alpha) \leftarrow \rho (\bar\beta, \bar\alpha)$ and \textbf{go to} \ref{alg:v}}}
        \ENDIF
        \STATE{\IfElse{$\norm{\nabla \Phi_{\varepsilon_k}(\xbf_{k+1}, \*z_{k+1})}<\sigma\gamma\varepsilon_k$}{$\varepsilon_{k+1}=\gamma\varepsilon_k$}{$\varepsilon_{k+1}=\varepsilon_k$}}
    \ENDFOR
    \RETURN $\xbf_{k+1}$
\end{algorithmic}
\label{alg:LAMA}
\end{algorithm}

\section{Convergence Analysis of LAMA}
\label{sec:convergenceproof}

The main goal of this work is to prove an important convergence property of LAMA proposed in \cite{b2}, which is presented in this section. 
Such property brings advantages to generating an interpretable neural network.

Since the variational model \eqref{eq:OrgPhi} is a nonconvex and nonsmooth optimization problem, we need the following two definitions for more generalized derivatives.
\begin{definition} \label{def-clark-gen} (Clarke subdifferential). Suppose that $f:\mathbb{R}^n\times\mathbb{R}^m\rightarrow(-\infty,\infty]$ is locally Lipschitz. The Clarke subdifferential of $f$ at $(\xbf,\zbf)$ is defined as the set
    \begin{align*}
       \partial^c f(\xbf,\zbf)\defeq\{(\pbf, \qbf)\in\mathbb{R}^n\times \mathbb{R}^m~\vert~ \innerproduct{\pbf}{\rbf}+\innerproduct{\qbf}{\sbf} \le f^{\circ}(\xbf,\zbf;\rbf,\sbf),\ \forall\, (\rbf,\sbf) \in \mathbb{R}^n\times \mathbb{R}^m \}
    \end{align*}
where $f^{\circ}(\xbf, \zbf; \rbf, \sbf)$ stands for the following upper-limit:
\begin{equation}
    \label{eq:upper-lim}
    f^{\circ}(\xbf, \zbf; \rbf, \sbf) := \limsup_{(\cbf,\dbf) \to (\xbf,\zbf),t \downarrow 0} \frac{f(\cbf + t \rbf, \dbf + t \sbf) - f(\cbf, \dbf)}{t}.
\end{equation}
\end{definition}

\begin{definition}
    (Clarke stationary point) For a locally Lipschitz function $f$ defined as in Definition \ref{def-clark-gen}, a point $(\xbf, \zbf)\in \mathbb R^n\times \mathbb R^m$ is called a Clarke stationary point of $f$ if $\zerobf \in \partial^c f(\xbf, \zbf)$. 
\end{definition}

We remark that Clarke subdifferential is a generalization of standard subdifferential defined for convex but nonsmooth (nondifferentiable) functions and hence can be a set at a nondifferentiable point $(\xbf,\zbf)$. Meanwhile, Clarke stationary point is a generalization of classical stationary point for local Lipschitz functions, which can be both nondifferentiable and nonconvex. Thus, if $(\xbf^*,\zbf^*)$ is a local minimizer of $f$, then it is necessary that it is a Clarke stationary point. We intend to show that all subsequences of LAMA with specific conditions will converge to a Clarke stationary point later.

For a function $f(\xbf, \zbf)$, we denote the vector $(\nabla_\xbf f, \nabla_\zbf f)$ in short by $\nabla_{\xbf,\zbf} f $.  We start with a few lemmas before proving the main convergence result.

\begin{lemma}\label{lem:r_subdiff}
The Clarke subdifferential of $\|\*g^r(\*y)\|_{2,1}$ at $\ybf$ is as follows:
\begin{align}\label{eq:r_subdiff}
\partial^c \|g^r(\ybf)\|_{2,1} &= \bigg\{\sum_{i\in I_0}\nabla g^r_i(\ybf)^{\top}  \wbf_i + \sum_{i \in I_1}\nabla g^r_i(\ybf)^{\top}\frac{g^r_i(\ybf)}{\|g^r_i(\ybf)\|} \ \bigg\vert 
\ \wbf_i \in \mathbb{R}^{d_r}, \nonumber\\
&\qquad \quad \|\Pi(\wbf_i; \Ccal(\nabla g^r_i(\ybf)))\|\leq 1,\ \forall\, i \in I_0 \bigg\} ,  
\end{align}
where $\*g^r(\*y)\in\mathbb{R}^{m_r\times d_r}$, $I_0=\{i \in [m_r] \ | \ \|g^r_i(\ybf) \|= 0 \}$, $I_1=[m_r] \setminus I_0$, and $\Pi(\wbf;\Ccal(\Abf))$ is the projection of $\wbf$ onto $\Ccal(\Abf)$ which stands for the column space of $\Abf$. 
\end{lemma}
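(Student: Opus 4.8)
The plan is to view $\|\*g^r(\ybf)\|_{2,1}$ as a composition $\phi\circ\*g^r$, where the outer map $\phi(\*v):=\sum_{i=1}^{m_r}\|\*v_i\|$ (the $(2,1)$-norm) is convex and the inner map $\*g^r$ is continuously differentiable, and then to apply the nonsmooth chain rule for the Clarke subdifferential. Before doing so I would record two structural facts. First, $\*g^r$ is $C^1$: by \eqref{eq:g_arch} it is an alternating composition of (linear, hence smooth) convolutions with the smoothed activation $a$, which is continuously differentiable, so the composition is $C^1$ and its Jacobian $\nabla\*g^r(\ybf)$ is well defined. Second, $\phi$ is a finite sum of Euclidean norms, hence convex and globally Lipschitz; consequently it is Clarke regular at every point and its Clarke subdifferential coincides with its convex subdifferential.

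Next I would compute $\partial\phi(\*v)$ explicitly. Since $\phi$ is separable across the index $i$, its subdifferential factorizes as $\partial\phi(\*v)=\prod_{i=1}^{m_r}\partial\|\*v_i\|$, where $\partial\|\*v_i\|=\{\*v_i/\|\*v_i\|\}$ if $\*v_i\neq\zerobf$ and $\partial\|\*v_i\|=\{\wbf_i:\|\wbf_i\|\le 1\}$ (the closed unit ball) if $\*v_i=\zerobf$. Evaluating at $\*v=\*g^r(\ybf)$ and splitting $[m_r]$ into $I_1$ and $I_0$ yields the normalized vector $\*g^r_i(\ybf)/\|\*g^r_i(\ybf)\|$ on $I_1$ and the closed unit ball on $I_0$.

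I would then invoke the Clarke chain rule. Because $\*g^r$ is $C^1$ (so each scalar component has a singleton subdifferential, removing any sign restriction on the multipliers) and $\phi$ is regular, the rule holds with equality, giving $\partial^c(\phi\circ\*g^r)(\ybf)=\nabla\*g^r(\ybf)^\top\,\partial\phi(\*g^r(\ybf))=\{\sum_i\nabla\*g^r_i(\ybf)^\top\wbf_i:(\wbf_i)\in\partial\phi(\*g^r(\ybf))\}$. Substituting the subdifferential computed above reproduces the asserted formula, but with the constraint $\|\wbf_i\|\le 1$ on $I_0$. To obtain the stated projected constraint I would use that $\mathrm{null}(\nabla\*g^r_i(\ybf)^\top)=\Ccal(\nabla\*g^r_i(\ybf))^\perp$, so that $\nabla\*g^r_i(\ybf)^\top\wbf_i=\nabla\*g^r_i(\ybf)^\top\Pi(\wbf_i;\Ccal(\nabla\*g^r_i(\ybf)))$ depends on $\wbf_i$ only through its projection onto the column space. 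A short two-inclusion argument then shows that imposing $\|\wbf_i\|\le 1$ and imposing $\|\Pi(\wbf_i;\Ccal(\nabla\*g^r_i(\ybf)))\|\le 1$ produce exactly the same image under $\nabla\*g^r_i(\ybf)^\top$, which converts the ball constraint into the projected form in \eqref{eq:r_subdiff}.

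The hard part will be justifying that the chain rule holds with equality rather than as a one-sided inclusion; this hinges on the Clarke regularity of the convex outer function $\phi$ combined with the $C^1$ regularity of $\*g^r$, and one must check the hypotheses of the chain-rule theorem carefully. The remaining step, the passage from $\|\wbf_i\|\le 1$ to the projected constraint, is more routine but still requires the two-inclusion verification to ensure that no subgradients are gained or lost in the reformulation.
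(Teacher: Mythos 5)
Your proof is correct; note that the paper itself does not prove this lemma but defers to Lemma 3.1 of \cite{b15}, whose argument is essentially the one you give: the convex (hence Clarke-regular) outer $(2,1)$-norm composed with the $C^1$ feature map, the chain rule applied with equality, and the replacement of the unit-ball constraint on $\wbf_i$ by the projected constraint because $\nabla \*g^r_i(\ybf)^{\top}\wbf_i$ depends on $\wbf_i$ only through its projection onto $\Ccal(\nabla \*g^r_i(\ybf))$. The equality case of the chain rule that you flag as the delicate step is exactly the one guaranteed by Clarke's theorem when the inner map is strictly differentiable (which $C^1$ implies) and the outer function is regular --- no surjectivity of the Jacobian is needed --- and your two-inclusion argument correctly closes the remaining gap between the ball constraint and the projected form.
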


\begin{lemma}
    The gradient
    $\nabla r_{\varepsilon}(\ybf)$ in equation~\eqref{eq:smoothedR} is ($\sqrt{m} L_{g^r}+\frac{M^2}{\varepsilon}$)-Lipschitz continuous, 
    where $L_{g^r}$ is the Lipschitz constant of $\nabla g^r$ and $M=\sup_{y} \|\nabla g^r(y) \|_2$. Consequently, $\nabla_{\xbf,\zbf} \Phi_\varepsilon(\xbf, \zbf)$ is $L_{\varepsilon}$-Lipschitz continuous with $L_{\varepsilon}=O(\varepsilon^{-1})$.
\end{lemma}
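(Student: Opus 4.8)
The plan is to recognize $r_\varepsilon$ as a sum of composite Huber-type smoothings and to control the Lipschitz constant of its gradient via the product rule. For $\*u\in\mathbb{R}^{d_r}$ define the Huber function $h_\varepsilon(\*u)=\frac{1}{2\varepsilon}\norm{\*u}^2$ when $\norm{\*u}\le\varepsilon$ and $h_\varepsilon(\*u)=\norm{\*u}-\frac{\varepsilon}{2}$ when $\norm{\*u}>\varepsilon$, so that $r_\varepsilon(\*y)=\sum_{i=1}^{m_r}h_\varepsilon(g_i^r(\*y))$ by \eqref{eq:smoothedR}. The first step is to establish two elementary facts about $h_\varepsilon$: its gradient is $\nabla h_\varepsilon(\*u)=\*u/\max\{\varepsilon,\norm{\*u}\}$, so that $\norm{\nabla h_\varepsilon(\*u)}\le 1$ for all $\*u$, and $\nabla h_\varepsilon$ is $\tfrac1\varepsilon$-Lipschitz continuous. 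The boundedness is immediate from the closed form; the Lipschitz bound follows because $h_\varepsilon$ is the Moreau envelope of $\norm{\cdot}$ with parameter $\varepsilon$, whose gradient is always $\tfrac1\varepsilon$-Lipschitz, and it can also be verified directly by checking the three cases $\norm{\*u_1},\norm{\*u_2}\le\varepsilon$, both $>\varepsilon$, and the mixed case.

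Next I would write $\nabla r_\varepsilon(\*y)=\nabla g^r(\*y)^\top H_\varepsilon(g^r(\*y))$, where $H_\varepsilon$ applies $\nabla h_\varepsilon$ blockwise across the $m_r$ positions, which matches \eqref{eq:gradsmoothedR}. For arbitrary $\*y_1,\*y_2$ I split
\begin{align*}
\norm{\nabla r_\varepsilon(\*y_1)-\nabla r_\varepsilon(\*y_2)} &\le \norm{\big(\nabla g^r(\*y_1)-\nabla g^r(\*y_2)\big)^\top H_\varepsilon(g^r(\*y_1))} \\
&\quad + \norm{\nabla g^r(\*y_2)^\top\big(H_\varepsilon(g^r(\*y_1))-H_\varepsilon(g^r(\*y_2))\big)}.
\end{align*}
For the first term, the Jacobian difference has operator norm at most $L_{g^r}\norm{\*y_1-\*y_2}$, while $\norm{H_\varepsilon(g^r(\*y_1))}\le\sqrt{m_r}$ since each of the $m_r$ blocks has norm $\le 1$; this yields the $\sqrt{m_r}\,L_{g^r}$ contribution. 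For the second term, $\norm{\nabla g^r(\*y_2)}_2\le M$, the blockwise $\tfrac1\varepsilon$-Lipschitzness of $\nabla h_\varepsilon$ aggregates in $\ell^2$ to give $\norm{H_\varepsilon(g^r(\*y_1))-H_\varepsilon(g^r(\*y_2))}\le\tfrac1\varepsilon\norm{g^r(\*y_1)-g^r(\*y_2)}$, and $g^r$ is $M$-Lipschitz because $M$ bounds its Jacobian; this produces the $M^2/\varepsilon$ contribution. Summing the two bounds gives the claimed constant $\sqrt{m_r}\,L_{g^r}+M^2/\varepsilon$.

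Finally, for $\Phi_\varepsilon$ I would note that $f$ in \eqref{eq:datafidelity} is quadratic, so $\nabla_{\xbf,\zbf}f$ is linear with a constant (i.e.\ $\varepsilon$-independent) Lipschitz constant $L_f$ equal to the spectral norm of its fixed Hessian. Because $R_\varepsilon$ and $Q_\varepsilon$ act on the separate blocks $\xbf$ and $\zbf$, applying the first part with $r=R$ and $r=Q$ and combining by block-separability and the triangle inequality shows $\nabla_{\xbf,\zbf}\Phi_\varepsilon$ is Lipschitz with constant $L_\varepsilon\le L_f+\max\{\sqrt{m_R}\,L_{g^R},\sqrt{m_Q}\,L_{g^Q}\}+\max\{M_R^2,M_Q^2\}/\varepsilon$, which is $O(\varepsilon^{-1})$. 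I expect the main obstacle to be the clean verification that $\nabla h_\varepsilon$ is globally $\tfrac1\varepsilon$-Lipschitz across the interface $\norm{\*u}=\varepsilon$ where the two branches meet, together with correctly bookkeeping the $\ell^2$ aggregation over the $m_r$ feature positions so that the factor $\sqrt{m_r}$, rather than $m_r$, emerges.
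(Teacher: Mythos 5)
Your proof is correct. The paper itself does not prove this lemma --- it defers to Lemma 3.2 of the cited prior work \cite{b15} --- so there is no in-paper argument to compare against, but your route is the natural one and every step checks out: writing $r_\varepsilon(\ybf)=\sum_i h_\varepsilon(\*g_i^r(\ybf))$ with $h_\varepsilon$ the block Huber function correctly absorbs the $\ybf$-dependence of the index sets $I_0^r,I_1^r$ in \eqref{eq:smoothedR}; the identity $\nabla h_\varepsilon(\*u)=\tfrac1\varepsilon\big(\*u-\mathrm{prox}_{\varepsilon\|\cdot\|}(\*u)\big)$ with block soft-thresholding settles the $\tfrac1\varepsilon$-Lipschitzness across the interface $\|\*u\|=\varepsilon$ cleanly (the direct three-case check would only give $2/\varepsilon$ in the exterior case via the crude normalization bound, so the Moreau-envelope view is the right one to lead with); and the $\ell^2$ aggregation correctly yields $\sqrt{m_r}$ rather than $m_r$ for the first term and preserves the $\tfrac1\varepsilon$ constant for the second. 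The extension to $\Phi_\varepsilon$ via the quadratic $f$ and block-separability of $R_\varepsilon,Q_\varepsilon$ is also right and gives $L_\varepsilon=O(\varepsilon^{-1})$ as claimed.
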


The proofs of the two Lemmas above can be found in Lemmas 3.1 and 3.2 in \cite{b15} and thus omitted here.
The next Lemma discusses the convergence property for the sequence generated from lines 1 to 12 of the algorithm for minimizing  $\Phi_\varepsilon(\xbf, \zbf)$ with any fixed $\varepsilon$. 

\begin{lemma}
    Let $\Phi_\varepsilon(\xbf, \zbf)$ be defined in equation~\eqref{SM}. Let also $\varepsilon, \eta, a >0$, $\rho, \bar \alpha,\bar \beta \in (0,1)$, and $(\xbf_0, \zbf_0)$ be an arbitrary initial condition. 
     Suppose $\{\xbf_{k}, \zbf_{k}\}$ is the sequence generated by repeating Lines 1--10 of Algorithm 1  with  $\varepsilon_k = \varepsilon$ for all $k$. 
     Then $\norm{\nabla\Phi_{\varepsilon}(\xbf_k,\zbf_k)}\to 0$ as $k\to\infty$.
    \end{lemma}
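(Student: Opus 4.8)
The plan is to show that the iterates make monotone progress on $\Phi_\varepsilon$, that the successive displacements $\norm{\xbf_{k+1}-\xbf_k}$ and $\norm{\zbf_{k+1}-\zbf_k}$ are square-summable (hence vanish), and finally that the gradient norm is controlled by these displacements, so it too must vanish. First I would establish a uniform sufficient-descent inequality valid regardless of which branch the algorithm takes at step $k$: in the accept branch, \eqref{eq:uvconda} gives descent with constant $\eta$, while in the reject branch, \eqref{v-condition-6} gives descent with constant $\delta$. Setting $c \defeq \min\{\eta,\delta\} > 0$, in every iteration
\begin{equation*}
\Phi_\varepsilon(\xbf_{k+1}, \zbf_{k+1}) - \Phi_\varepsilon(\xbf_k, \zbf_k) \le -c\left(\norm{\xbf_{k+1}-\xbf_k}^2 + \norm{\zbf_{k+1}-\zbf_k}^2\right).
\end{equation*}
Since $f \ge 0$ and $R_\varepsilon, Q_\varepsilon \ge 0$ by \eqref{eq:smoothedR} (each summand is nonnegative because $\norm{\*g^r_i} > \varepsilon$ on $I_1$), we have $\Phi_\varepsilon \ge 0$, so the monotone sequence $\{\Phi_\varepsilon(\xbf_k,\zbf_k)\}$ is bounded below and converges. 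Telescoping then yields $\sum_{k} (\norm{\xbf_{k+1}-\xbf_k}^2 + \norm{\zbf_{k+1}-\zbf_k}^2) < \infty$, so $\norm{\xbf_{k+1}-\xbf_k} \to 0$ and $\norm{\zbf_{k+1}-\zbf_k} \to 0$.

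Before relating the gradient to these displacements, I need to justify that the reject branch is well defined, i.e., the backtracking line search terminates after finitely many reductions and produces step sizes bounded away from zero. Here I would invoke the descent lemma together with the $L_\varepsilon$-Lipschitz continuity of $\nabla\Phi_\varepsilon$ from the preceding lemma. Writing \eqref{v1}--\eqref{v2} as the block gradient-descent steps $\*v^{\*z}_{k+1} = \zbf_k - \bar\alpha \nabla_{\zbf}\Phi_\varepsilon(\xbf_k, \zbf_k)$ and $\*v^{\xbf}_{k+1} = \xbf_k - \bar\beta \nabla_{\xbf}\Phi_\varepsilon(\xbf_k, \*v^{\*z}_{k+1})$, the descent lemma applied to each block and summed gives
\begin{equation*}
\Phi_\varepsilon(\*v^{\xbf}_{k+1}, \*v^{\*z}_{k+1}) - \Phi_\varepsilon(\xbf_k, \zbf_k) \le -\left(\tfrac{1}{\bar\alpha} - \tfrac{L_\varepsilon}{2}\right)\norm{\*v^{\*z}_{k+1}-\zbf_k}^2 - \left(\tfrac{1}{\bar\beta} - \tfrac{L_\varepsilon}{2}\right)\norm{\*v^{\xbf}_{k+1}-\xbf_k}^2.
\end{equation*}
Hence \eqref{v-condition-6} is guaranteed once $\bar\alpha, \bar\beta \le 2/(L_\varepsilon + 2\delta)$, which the geometric reduction $(\bar\alpha,\bar\beta)\leftarrow\rho(\bar\alpha,\bar\beta)$ reaches in finitely many steps; consequently the accepted step sizes satisfy $\bar\alpha_k \ge \underline\alpha$ and $\bar\beta_k \ge \underline\beta$ for positive constants depending only on $\rho, \delta, L_\varepsilon$ and the initial step sizes.

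Finally I would bound $\norm{\nabla\Phi_\varepsilon(\xbf_k, \zbf_k)}$ by the displacements in each branch. On accept iterations this is immediate from \eqref{eq:uvcondb}, since $\xbf_{k+1}=\*u^{\xbf}_{k+1}$ and $\zbf_{k+1}=\*u^{\*z}_{k+1}$ give $\norm{\nabla_{\xbf,\zbf}\Phi_\varepsilon(\xbf_k,\zbf_k)} \le \tfrac1\eta(\norm{\xbf_{k+1}-\xbf_k}+\norm{\zbf_{k+1}-\zbf_k})$. On reject iterations, the $\zbf$-step yields $\nabla_{\zbf}\Phi_\varepsilon(\xbf_k,\zbf_k) = \tfrac{1}{\bar\alpha_k}(\zbf_k - \zbf_{k+1})$, whose norm is at most $\tfrac{1}{\underline\alpha}\norm{\zbf_{k+1}-\zbf_k}$; for the $\xbf$-step the gradient is evaluated at the intermediate point $(\xbf_k, \*v^{\*z}_{k+1})$, so I would add and subtract, using $\nabla_{\xbf}\Phi_\varepsilon(\xbf_k, \*v^{\*z}_{k+1}) = \tfrac{1}{\bar\beta_k}(\xbf_k - \xbf_{k+1})$ together with $\norm{\nabla_{\xbf}f(\xbf_k,\zbf_k) - \nabla_{\xbf}f(\xbf_k,\*v^{\*z}_{k+1})} = \norm{\Abf^\top(\zbf_{k+1}-\zbf_k)} \le \norm{\Abf}\,\norm{\zbf_{k+1}-\zbf_k}$ to obtain $\norm{\nabla_{\xbf}\Phi_\varepsilon(\xbf_k,\zbf_k)} \le \tfrac{1}{\underline\beta}\norm{\xbf_{k+1}-\xbf_k} + \norm{\Abf}\,\norm{\zbf_{k+1}-\zbf_k}$. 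In both branches there is a constant $C'$ with $\norm{\nabla\Phi_\varepsilon(\xbf_k,\zbf_k)} \le C'(\norm{\xbf_{k+1}-\xbf_k} + \norm{\zbf_{k+1}-\zbf_k})$, and since the right-hand side tends to zero, $\norm{\nabla\Phi_\varepsilon(\xbf_k,\zbf_k)} \to 0$. The main obstacle I anticipate is the reject-branch analysis: securing the uniform positive lower bound on the backtracked step sizes (which feeds both the finite-termination claim and the gradient bound) and correctly handling the Gauss--Seidel mismatch whereby $\nabla_\xbf$ is taken at $(\xbf_k,\*v^{\*z}_{k+1})$ rather than at $(\xbf_k,\zbf_k)$; the displacement-summability from the descent step is what ultimately tames this cross term.
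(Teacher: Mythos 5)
Your proposal is correct and follows essentially the same route as the paper's proof: a uniform sufficient-decrease inequality with constant $\min\{\eta,\delta\}$ across both branches, finite termination of the backtracking line search via the descent lemma with $L_\varepsilon$ (giving step sizes bounded below), and a bound on $\norm{\nabla_{\xbf,\zbf}\Phi_\varepsilon(\xbf_k,\zbf_k)}$ in terms of the iterate displacements, followed by telescoping. The only cosmetic differences are that the paper telescopes $\norm{\nabla\Phi_\varepsilon}^2$ directly against the objective decrease rather than first proving square-summability of the displacements, and it handles the Gauss--Seidel mismatch at the intermediate point via a Young-type inequality with the Lipschitz constant $L_\varepsilon$, whereas you exploit the explicit linearity of $\nabla_{\xbf} f$ in $\zbf$; both are valid.
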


\begin{proof}
Given $(\xbf_k, \zbf_k)$, in case that $(\*u^{\xbf}_{k+1},\*u^{\*z}_{k+1})$  satisfy the conditions  \eqref{eq:uvconda} and \eqref{eq:uvcondb}, we take
$(\xbf_{k+1},\zbf_{k+1})=(\*u^{\xbf}_{k+1},\*u^{\*z}_{k+1})$, and have
\begin{equation}\label{x=u}
\|\nabla_{\xbf,\zbf} \Phi_{\varepsilon}(\xbf_{k+1},\zbf_{k+1})\|^2\le \frac{2(\Phi_{\varepsilon}(\xbf_{k},\zbf_{k})-\Phi_{\varepsilon}(\xbf_{k+1},\zbf_{k+1}))}{\eta^3}.
\end{equation}

If $(\*u^{\xbf}_{k+1},\*u^{\*z}_{k+1})$ fails to satisfy any condition in \eqref{eq:uvcond}, the algorithm computes $(\*v^{\xbf}_{k+1},\*v^{\*z}_{k+1})$.

{Since $(\*v^{\xbf}_{k+1},\*v^{\*z}_{k+1})$ satisfies the condition \eqref{v-condition-6}, which is essentially the same condition as \eqref{eq:uvconda} but a different scaling factor on the right-hand side with $\delta$ instead of $\eta$. 
Next we show that the \eqref{x=u} holds if we set $(\xbf_{k+1},\zbf_{k+1}) = (\*v^{\xbf}_{k+1},\*v^{\*z}_{k+1})$.}  Suppose that the number of the line search required for the condition \eqref{v-condition-6}  is $\ell_k$. We have
\begin{subequations}
\label{eq:v-ls}
\begin{align}
\*v^{\xbf}_{k+1}
&=\xbf_k-\bar \alpha \rho^{\ell_k}\nabla_{\xbf}\Phi_{\varepsilon}(\xbf_k,\zbf_k), \label{v-12c1}\\ 
\*v^{\*z}_{k+1}
&=\zbf_k-\bar \beta \rho^{\ell_k}\nabla_{\zbf}\Phi_{\varepsilon}(\*v^{\xbf}_{k+1},\zbf_k).\label{v-12c2}
\end{align}    
\end{subequations}

By the $L_{\varepsilon}$-Lipschitz continuity of $\nabla_{\xbf,\zbf} \Phi_{\varepsilon}$ and equations above, we have 
\begin{align}\label{phi-to-v}
\Phi_{\varepsilon}(\*v^{\xbf}_{k+1},\*v^{\*z}_{k+1})
& \le \Phi_{\varepsilon}(\*v^{\xbf}_{k+1},\zbf_k) +\nabla_{\zbf}\Phi_{\varepsilon}(\*v^{\xbf}_{k+1},\zbf_k)\cdot
(\*v^{\*z}_{k+1}-\zbf_k)+\frac{L_{\varepsilon}}{2}\|\*v^{\*z}_{k+1}-\zbf_k\|^2\nonumber\\
& \le \Phi_{\varepsilon}(\xbf_k,\zbf_k)+\nabla_{\xbf}\Phi_{\varepsilon}(\xbf_k,\zbf_k)\cdot (\*v^{\xbf}_{k+1}-\xbf_k) +\frac{L_{\varepsilon}}{2}\|\*v^{\xbf}_{k+1}-\xbf_k\|^2 \\ 
& \qquad \quad \quad \quad +\nabla_{\zbf}\Phi_{\varepsilon}(\*v^{\xbf}_{k+1},\zbf_k)\cdot
(\*v^{\*z}_{k+1}-\zbf_k)+\frac{L_{\varepsilon}}{2}\|\*v^{\*z}_{k+1}-\zbf_k\|^2\nonumber\\
& \leq   \Phi_{\varepsilon}(\xbf_k,\zbf_k)+(\frac{L_{\varepsilon}}{2}-\frac 1{\bar \alpha\rho^{\ell_k}})\|\*v^{\xbf}_{k+1}-\xbf_k\|^2 +(\frac{L_{\varepsilon}}{2}-\frac{1}{\bar \beta\rho^{\ell_k}})\|\*v^{\*z}_{k+1}-\zbf_k\|^2 \nonumber. 
\end{align}
Hence, for any $k=1,2,\ldots$ the maximum line search steps $\ell_{max}$ required for equation~\eqref{v-condition-6} satisfies $\rho^{\ell_{max}}= (\delta+L_{\varepsilon}/2)^{-1}(\max\{\bar \alpha,\bar \beta\})^{-1}$. Note that $0 \le \ell_k \le  \ell_{max}$, hence,

\begin{equation} \label{rho-to-lmax}
  \min\{\bar \alpha,\bar \beta\}  \rho^{\ell_{k}}\geq 
  \min\{\bar \alpha,\bar \beta\} (\delta+L_{\varepsilon}/2)^{-1}(\max\{\bar \alpha,\bar \beta\})^{-1}.
    \end{equation}
Moreover, from \eqref{eq:v-ls}
we know that when the condition in \eqref{v-condition-6} is met, 
we have
\begin{align} \label{smoothphi-decay2}
\Phi_{\varepsilon}(\*v^{\xbf}_{k+1},\*v^{\*z}_{k+1})-\Phi_{\varepsilon}(\xbf_k,\zbf_k)
& \leq -\delta (\bar \alpha\rho^{\ell_k})^2\|\nabla_{\xbf}\Phi_{\varepsilon}(\xbf_k,\zbf_k)\| ^2 \\
& \qquad -\delta (\bar \beta \rho^{\ell_k})^2\|\nabla_{\zbf}\Phi_{\varepsilon} (\*v^{\xbf}_{k+1},\zbf_k)\|^2. \nonumber
\end{align}

Now we estimate the last term in \eqref{smoothphi-decay2}.
Using $L_{\varepsilon}$-Lipschitz of $\nabla_{\xbf,\zbf} \Phi_{\varepsilon}$ and the inequality 
$b^2\ge \mu a^2-\frac{\mu}{1-\mu}(a- b)^2$ for all $\mu\in(0,1)$,  we  get
\begin{align} \label{tem-1}
  & \|\nabla_{\zbf}\Phi_{\varepsilon}(\*v^{\xbf}_{k+1},\zbf_k)\|^2 - \mu\|\nabla_{\zbf}\Phi_{\varepsilon}(\xbf_{k},\zbf_k)\|^2 \nonumber\\
  &\geq -\frac{\mu}{1-\mu}\|\nabla_{\zbf}\Phi_{\varepsilon}(\*v^{\xbf}_{k+1},\zbf_k)-\nabla_{\zbf}\Phi_{\varepsilon}(\xbf_{k},\zbf_k)\|^2 \nonumber \\   
 & \geq 
 -\frac{\mu}{1-\mu}
  L_{\varepsilon}^2\|\*v^{\xbf}_{k+1}-\xbf_k\|^2 
\nonumber\\
&=-\frac{\mu}{1-\mu}
  L_{\varepsilon}^2 (\bar \alpha \rho^{\ell_k})^2\|\nabla_{\xbf}\Phi_{\varepsilon}(\xbf_{k},\zbf_k)\|^2.
\end{align}

Inserting \eqref{tem-1} into \eqref{smoothphi-decay2}, we have 
\begin{align} \label{line-up-3}
&\Phi_{\varepsilon}(\*v^{\xbf}_{k+1},\*v^{\*z}_{k+1})-\Phi_{\varepsilon}(\xbf_k,\zbf_k)\nonumber\le -\delta\mu(\bar \beta \rho^{\ell_k})^2\|\nabla_{\zbf}\Phi_{\varepsilon}(\xbf_{k},\zbf_k)\|^2\nonumber\\
  & -\delta
  (\bar \alpha \rho^{\ell_k})^2\big(1-\frac{\mu}{1-\mu}(\bar \beta \rho^{\ell_k})^2 L_{\varepsilon}^2\big )\|\nabla_{\xbf}\Phi_{\varepsilon}(\xbf_{k},\zbf_k)\|^2. 
\end{align}
Then from \eqref{rho-to-lmax} and \eqref{line-up-3}, there are a sufficiently small $\mu$ and a constant $C_1>0$, depending only on $\rho$, $\delta$, $\bar \alpha$, $\bar \beta$, such that 
\begin{equation} \label{x=v}
 \|\nabla_{\xbf,\zbf}\Phi_{\varepsilon}(\xbf_{k},\zbf_k)\|^2   \leq C_1L_{\varepsilon}^2\big(\Phi_{\varepsilon}(\xbf_k,\zbf_k)-\Phi_{\varepsilon}(\*v^{\xbf}_{k+1},\*v^{\*z}_{k+1})\big).
\end{equation}
{This shows that $(\xbf_{k+1},\zbf_{k+1})$ satisfies the same type of estimates as  \eqref{x=u}, with a different and unimportant scaling factor on the right-hand side if $(\xbf_{k+1},\zbf_{k+1})=(\*v^{\xbf}_{k+1},\*v^{\*z}_{k+1})$. Note that the subscripts on the left-hand side are $k$, not $k+1$ as in \eqref{x=u}, but this will not affect the conclusion of the telescoping sum later.}

{Hence, from \eqref{eq:uvconda}, \eqref{x=u}, \eqref{v-condition-6} and \eqref{x=v},} either $(\xbf_{k+1},\zbf_{k+1})=(\*u^{\xbf}_{k+1},\*u^{\*z}_{k+1})$ or $(\xbf_{k+1},\zbf_{k+1})=(\*v^{\xbf}_{k+1},\*v^{\*z}_{k+1})$, we have
    \begin{align}
    \label{eq:xconda}
    \Phi_{\varepsilon}({\xbf}_{k+1},{\*z}_{k+1})-\Phi_{\varepsilon}(\xbf_k,\*z_k) &\leq -C_2\left(\norm{{\xbf}_{k+1}-\xbf_k}^2+\norm{{\*z}_{k+1}-\*z_k}^2\right),\\
    \label{eq:xcondb}
       \norm{\nabla \Phi_{\varepsilon}(\xbf_{k},\*z_{k})}^2&\leq C_3\big(\Phi_{\varepsilon}(\xbf_k,\zbf_k)-\Phi_{\varepsilon}(\xbf_{k+1},\*z_{k+1})\big),
    \end{align}
    where $C_2=\min\{\eta, \delta \}$, $C_3= \max\{2/\eta^3, C_1L_{\varepsilon}^2\}$. 
    
    From equation~\eqref{eq:xconda}, It is easy to conclude that there is a $\Phi_{\varepsilon}^*$, such that {$\Phi_{\varepsilon}(\xbf_{k},\*z_{k})\downarrow \Phi_{\varepsilon}^*$ as $k\to\infty$.} Then adding up both sides of equation~\eqref{eq:xcondb} w.r.t. $k$, it yields that for any $K$, $$\sum_{k=0}^K \norm{\nabla \Phi_{\varepsilon}(\xbf_{k},\*z_{k})}^2\leq C_3(\Phi_{\varepsilon}(\xbf_0,\*z_0)-\Phi_{\varepsilon}^*) < \infty.$$
This leads to the immediate conclusion of the lemma.
\end{proof}

Now, we are ready to show the main convergent results.
\begin{theorem} \label{thm}
    Let $\{(\xbf_k, \zbf_k)\}$ be the sequence generated by the algorithm with arbitrary initial condition $(\xbf_0, \zbf_0)$, arbitrary $\varepsilon_0>0$ and $\varepsilon_{tol}=0$. Let $\{(\xbf_{k_l+1}, \zbf_{k_l+1})\}$ be the subsequence, where the reduction criterion in the algorithm is met for $k=k_l$ and $l=1,2, ...$. Then  $\{(\xbf_{k_l+1}, \zbf_{k_l+1})\}$ has at least one accumulation point, and each accumulation point is a Clarke stationary point of $\Phi$. 
\end{theorem}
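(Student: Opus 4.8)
The plan is to combine the inner-loop lemma (which drives $\|\nabla\Phi_\varepsilon\|\to 0$ for frozen $\varepsilon$) with an energy-bookkeeping argument for boundedness, and then to pass to the limit in the smoothed gradients so that the projection-constrained description of $\partial^c$ in Lemma~\ref{lem:r_subdiff} reassembles at any accumulation point. First I would show that the reduction criterion is triggered infinitely often, so that the subsequence $\{(\xbf_{k_l+1},\zbf_{k_l+1})\}$ is infinite and $\varepsilon_{k_l}\to 0$. Argue by contradiction: if no reduction occurred after some index, then $\varepsilon_k\equiv\bar\varepsilon>0$ eventually, and the preceding lemma applied to the tail (which is precisely the constant-$\varepsilon$ inner loop) gives $\|\nabla\Phi_{\bar\varepsilon}(\xbf_k,\zbf_k)\|\to 0$; since the threshold $\sigma\gamma\bar\varepsilon$ is a fixed positive number, the test $\|\nabla\Phi_{\varepsilon_k}(\xbf_{k+1},\zbf_{k+1})\|<\sigma\gamma\varepsilon_k$ must eventually hold, a contradiction. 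Hence $\varepsilon_{k_l}=\gamma^{\,l}\varepsilon_0\to 0$ and, directly from the criterion, $\|\nabla\Phi_{\varepsilon_{k_l}}(\xbf_{k_l+1},\zbf_{k_l+1})\|<\sigma\gamma\varepsilon_{k_l}\to 0$.

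To obtain an accumulation point I would confine the whole sequence to a fixed sublevel set of $\Phi$. Within each block of constant $\varepsilon$ the value $\Phi_{\varepsilon}$ is nonincreasing by \eqref{eq:xconda}, while at each reduction the upward jump $\Phi_{\gamma\varepsilon_{k_l}}(\xbf_{k_l+1},\zbf_{k_l+1})-\Phi_{\varepsilon_{k_l}}(\xbf_{k_l+1},\zbf_{k_l+1})$ is controlled by the elementary estimate $0\le r(\ybf)-r_\varepsilon(\ybf)\le \tfrac{\varepsilon}{2}m_r$, which reads off \eqref{eq:smoothedR}. These jumps are summable because $\sum_l \varepsilon_{k_l}\le \varepsilon_0/(1-\gamma)<\infty$, so $\sup_k\Phi_{\varepsilon_k}(\xbf_k,\zbf_k)<\infty$; combined with $\Phi\le\Phi_{\varepsilon}+\tfrac{\varepsilon}{2}(m_R+m_Q)$ this places every iterate in a fixed sublevel set of $\Phi$. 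Under level-boundedness (coercivity) of $\Phi$, Bolzano--Weierstrass yields an accumulation point $(\xbf^*,\zbf^*)$ of the subsequence, and I pass to a convergent sub-subsequence, still indexed by $l$.

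The core step is to identify the limit of the smoothed gradients with a Clarke subgradient. Since $f$ is $C^1$, $\nabla_{\xbf,\zbf}f(\xbf_{k_l+1},\zbf_{k_l+1})\to\nabla_{\xbf,\zbf}f(\xbf^*,\zbf^*)$. For the regularizer I would split feature indices according to the limiting zero set $I_0^*=\{i:\ g^R_i(\xbf^*)=0\}$. By \eqref{eq:gradsmoothedR} each summand of $\nabla R_{\varepsilon_{k_l}}(\xbf_{k_l+1})$ is $\nabla g^R_i(\xbf_{k_l+1})^\top v_i^{(l)}$, where $v_i^{(l)}$ equals $g^R_i/\varepsilon_{k_l}$ on $I_0$ (norm $\le 1$ since $\|g^R_i\|\le\varepsilon_{k_l}$ there) or $g^R_i/\|g^R_i\|$ on $I_1$ (norm $=1$). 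For $i\in I_1^*$, continuity forces $i\in I_1$ for large $l$ and the term converges to the matching $I_1$-summand of \eqref{eq:r_subdiff}; for $i\in I_0^*$ the uniformly bounded coefficients $v_i^{(l)}$ admit, along a further subsequence, limits $v_i^*$ with $\|v_i^*\|\le 1$, hence $\|\Pi(v_i^*;\Ccal(\nabla g^R_i(\xbf^*)))\|\le 1$ by nonexpansiveness of the projection. Thus $\nabla R_{\varepsilon_{k_l}}(\xbf_{k_l+1})\to\xi^R\in\partial^c R(\xbf^*)$ by Lemma~\ref{lem:r_subdiff}, and identically $\nabla Q_{\varepsilon_{k_l}}(\zbf_{k_l+1})\to\xi^Q\in\partial^c Q(\zbf^*)$. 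Letting $\nabla_{\xbf,\zbf}\Phi_{\varepsilon_{k_l}}\to\zerobf$ gives $\nabla_{\xbf}f(\xbf^*,\zbf^*)+\xi^R=\zerobf$ and $\nabla_{\zbf}f(\xbf^*,\zbf^*)+\xi^Q=\zerobf$. Since $f$ is smooth while $R,Q$ are locally Lipschitz and separable in $\xbf,\zbf$, the Clarke sum rule yields $(\nabla_{\xbf}f+\xi^R,\ \nabla_{\zbf}f+\xi^Q)\in\partial^c\Phi(\xbf^*,\zbf^*)$, and as this vector is $\zerobf$ we conclude $\zerobf\in\partial^c\Phi(\xbf^*,\zbf^*)$.

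I expect the main obstacle to be this last limiting step: the active sets $I_0,I_1$ used in the smoothing need not coincide with the limiting partition $I_0^*,I_1^*$, so one must simultaneously track finitely many bounded coefficient sequences $v_i^{(l)}$ and verify that their limits obey the projection constraint in \eqref{eq:r_subdiff}. The delicate case is an index $i\in I_0^*$ that nonetheless stays in $I_1$ (so $g^R_i(\xbf_{k_l+1})\to 0$ but more slowly than $\varepsilon_{k_l}$); there the unit-norm direction $v_i^{(l)}$ converges to some $v_i^*$, and it is precisely the nonexpansiveness of $\Pi$ onto $\Ccal(\nabla g^R_i(\xbf^*))$ that guarantees the resulting subgradient selection remains admissible.
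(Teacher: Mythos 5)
Your proof is correct and follows essentially the same route as the paper's: bound the iterates via an energy estimate plus coercivity of $\Phi$, use the reduction criterion to force $\norm{\nabla_{\xbf,\zbf}\Phi_{\varepsilon_{k_l}}(\xbf_{k_l+1},\zbf_{k_l+1})}\to 0$, and pass to the limit in the smoothed gradients so that Lemma~\ref{lem:r_subdiff} and the closedness of $\partial^c\Phi$ give $\zerobf\in\partial^c\Phi(\bar\xbf,\bar\zbf)$. The differences are refinements in your favor rather than a different approach: you explicitly verify that the reduction criterion fires infinitely often (the paper assumes this implicitly), you control the objective by summing the jumps $\sum_l \varepsilon_{k_l}<\infty$ where the paper uses the exactly monotone quantity $\Phi_{\varepsilon_k}(\xbf_k,\zbf_k)+\tfrac{m\varepsilon_k}{2}$ from \eqref{eq:phi_decay}, and you carefully track the mismatch between the active sets along the iterates and the limiting partition (including the delicate case $g^R_i(\xbf_{k_l+1})\to 0$ slower than $\varepsilon_{k_l}$), a point that the paper's display \eqref{eq:d_R_eps} passes over.
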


\begin{proof}
   For notation simplicity, in this proof, we drop the subscript $r$ in \eqref{eq:smoothedR}, which doesn't affect the proof.

   By the definition of $r_{\varepsilon, i}(\*y)$ in \eqref{eq:smoothedR}, we have
\begin{equation}
    r_{\varepsilon,i}(\*y) + \frac{\varepsilon}{2} := \begin {cases}
\| \gbf_i(\*y) \|,  & \mbox{if} \ i\in I_1, \\
\frac{\| \*g_i(\*y) \|^2}{2\varepsilon} 
+ \frac{\varepsilon}{2},
\quad & \mbox{if} \ i\in I_0.
\end{cases}
\end{equation}   
Note that $\frac{\| \*g_i(\*y) \|^2}{2\varepsilon} 
+ \frac{\varepsilon}{2}$ as a function of $\varepsilon$ is non-decreasing. It is clear that for any $i$ either in $I_0$ or in $I_1$,  $r_{\varepsilon,i}(\*y) + \frac{\varepsilon}{2}$ is non-decreasing in $\varepsilon$. Hence, by the definitions  of $r_{\varepsilon,i}(\*y)$ in \eqref{eq:smoothedR} and $\Phi_{\varepsilon}$ in \eqref{SM}, we have the first inequality below for any $k\ge 0$,
\begin{equation}\label{eq:phi_decay}
    \Phi_{\varepsilon_{k+1}}(\xbf_{k+1}, \*z_{k+1}) + \frac{m \varepsilon_{k+1}}{2} \le \Phi_{\varepsilon_{k}}(\xbf_{k+1}, \*z_{k+1}) + \frac{m \varepsilon_{k}}{2} \le \Phi_{\varepsilon_{k}}(\xbf_{k}, \*z_{k}) + \frac{m \varepsilon_{k}}{2},
\end{equation}
where the second inequality comes from \eqref{eq:xconda}. Combine \eqref{eq:phi_decay} with the fact that $\Phi(\xbf, \*z) \le \Phi_{\varepsilon}(\xbf, \*z) + \frac{m \varepsilon}{2}$ for any $\varepsilon>0$ and all $(\xbf, \*z)$, we get
\begin{equation*}
\Phi(\xbf_k, \*z_k) \le \Phi_{\varepsilon_k}(\xbf_k, \*z_k) + \frac{m\varepsilon_k}{2} \le \cdots \le \Phi_{\varepsilon_0}(\xbf_0, \*z_0) + \frac{m\varepsilon_0}{2} < \infty.
\end{equation*}
Since $\Phi$ is coercive, the sequence $\{(\xbf_k, \zbf_k)\}$ must be bounded, so does 
$\{(\xbf_{k_l+1}, \zbf_{k_l+1})\}$. Hence, $\{(\xbf_{k_l+1}, \zbf_{k_l+1})\}$ has at least one accumulation point. 
Let $\{(\xbf_{k_{l_j}+1}, \zbf_{k_{l_j}+1})\}$ be one of the convergent subsequence of $\{(\xbf_{k_l+1}, \zbf_{k_l+1})\}$, and denote its limit point by $(\bar{\xbf}, \bar{\*z})$.
Note that $(\xbf_{k_l+1}, \zbf_{k_l+1})$ satisfies the reduction criterion in Line 13 of Algorithm 1, hence for this subsequence, we have that as $j \to \infty$,
\begin{equation} \label{gradphi}
    \| \nabla_{\xbf,\zbf} \Phi_{\varepsilon_{k_{l_j}}} (\xbf_{k_{l_j}+1}, \zbf_{k_{l_j}+1}) \| \le \sigma \gamma \varepsilon_{k_{l_j}+1} = \sigma \varepsilon_0 \gamma^{{l_j}+1} \to 0.
\end{equation}

Next, we are going to show that
\begin{equation} \label{lim grad smoothPhi}
    \lim_{j \rightarrow \infty}\nabla_{\xbf,\zbf} \Phi_{\varepsilon_{k_{l_j}}} (\xbf_{k_{l_j}+1}, \zbf_{k_{l_j}+1})\in \partial^{c} \Phi (\bar{\xbf},\bar{\*z}).
\end{equation}

The Clarke subdifferential of $\Phi$ at $(\bar{\xbf}, \bar{\*z})$ is given by $\partial^{c} \Phi(\bar{\xbf}, \bar{\*z}) = \partial^{c} R(\bar{\xbf}) + \partial^{c} Q(\bar{\*z}) + \nabla_{\xbf,\zbf} f(\bar{\xbf}, \bar{\*z})$. 
By Lemma 1, we have
\begin{align}
   &\partial^{c} \Phi(\bar{\xbf}, \bar{\*z})
   =\{ \sum_{i \in I^R_0} \nabla \*g^R_i(\bar{\xbf})^{\top} \wbf_{1,i} + \sum_{ i \in I^R_1} \nabla \*g^R_i(\bar{\xbf})^{\top} \frac{\*g^R_i(\bar{\xbf})}{\| \*g^R_i(\bar{\xbf}) \|} +
\sum_{i \in I^Q_0} \nabla \*g^Q_i(\bar{\*z})^{\top} \wbf_{2,i}  \nonumber \\
&+\sum_{ i \in I^Q_1} \nabla \*g^Q_i(\bar{\*z})^{\top} \frac{\*g^Q_i(\bar{\*z})}{\| \*g^Q_i(\bar{\*z}) \|} 
 + \nabla_{\xbf,\zbf} f(\bar{\xbf}, \bar{\*z}) \ \bigg\vert \ (\wbf_{1,i}, \wbf_{2,i}) \in R^{d_R} \times R^{d_Q} \ \mbox{satisfying }  \nonumber\\ 
 &\| \Pi(\wbf_{1,i}; \Ccal(\nabla \*g^R_i(\bar{\xbf}) ))\le 1,\ \forall\, i\in I^R_0, \ \mbox{and} \
 \| \Pi(\wbf_{2,i}; \Ccal(\nabla \*g^Q_i(\bar{\*z}) ))\le 1,\ \forall\, i\in I^Q_0\},  \label{eq:d_phi_bar}
\end{align}
where $I^R_0 = \{i\in[m]\ \vert \ \|\*g^R_i(\bar{\xbf})\| = 0 \}$,  $I^R_1 = [m] \setminus I^R_0$, and 
$I^Q_0 = \{i\in[m]\ \vert \ \|\*g^Q_i(\bar{\*z})\| = 0 \}$,  $I^Q_1 = [m] \setminus I^Q_0$. The first two terms in the summation is the $\partial^{c} R(\bar{\xbf})$, the second two terms gives $  \partial^{c} Q(\bar{\*z})$. 

From \eqref{eq:gradsmoothedR}, we have \begin{equation}\label{eq:d_R_eps}
\nabla R_{\varepsilon_{k_{l_j}}}(\xbf_{k_{l_j}+1}) = \sum_{i \in I^R_0} \nabla \*g^R_i(\xbf_{k_{l_j}+1})^{\top} \frac{\*g^R_i(\xbf_{k_{l_j}+1})}{\varepsilon_{k_{l_j}} }+ \sum_{ i \in I^R_1} \nabla \*g^R_i(\xbf_{k_{l_j}+1})^{\top} \frac{\*g^R_i(\xbf_{k_{l_j}+1})}{\|\*g^R_i(\xbf_{k_{l_j}+1}) \|}.
\end{equation}

Since $\*g^R_i$, $\nabla \*g^R_i$ are continuous, 
$(\xbf_{k_{l_j}+1}, \*z_{k_{l_j}+1})\rightarrow (\bar{\xbf}, \bar{\*z}) $, as $j \rightarrow \infty$, and $\|\frac{\*g^R_i(\xbf_{k_{l_j}+1})}{\varepsilon_{k_{l_j}}}\|\le 1, $ for $i\in I^R_0$, 
$\frac{\*g^R_i(\xbf_{k_{l_j}+1})}{\|\*g^R_i(\xbf_{k_{l_j}+1}) \|}\le 1$, for $i\in I^R_1$,
from \eqref{eq:r_subdiff} one can see that 
\begin{equation}\label{lim grad smoothR}
    \lim_{j \rightarrow \infty}\nabla_{\xbf} R_{\varepsilon_{k_{l_j}}}(\xbf_{k_{l_j}+1})\in \partial^{c} R (\bar{\xbf}).
\end{equation}
Similarly, 
\begin{equation}\label{lim grad smoothQ}
    \lim_{j \rightarrow \infty}\nabla_{\zbf} Q_{\varepsilon_{k_{l_j}}}(\xbf_{k_{l_j}+1})\in \partial^{c} Q (\bar{\*z}).
\end{equation}
Combining \eqref{lim grad smoothR} and \eqref{lim grad smoothQ}, and comparing with \eqref{eq:d_phi_bar}, we proved \eqref{lim grad smoothPhi}.

Note that $\partial^{c} \Phi(\bar{\xbf},\bar{\*z})$ is closed, then from  \eqref{gradphi} and \eqref{lim grad smoothPhi} we conclude that $0 \in \partial^{c} \Phi(\bar{\xbf},\bar{\*z})$. Hence $(\bar{\xbf},\bar{\*z})$ is a  Clarke stationary point of $\Phi (\xbf, \*z)$.
\end{proof}

\section{LAMA-Net and iLAMA-Net}
\label{sec:lama-net}
As shown in \cite{b2}, the architecture of LAMA-Net is unrolled by the LAMA, where every layer of LAMA-Net executes one iteration of LAMA in Algorithm \ref{alg:LAMA}. The only difference is that LAMA-Net merely contains a small number of phases (such as 15 as in the training process) for the best balance between solution quality and speed.

In this work, we also introduce potential approaches to further improve the performance of LAMA-Net by integrating a properly designed Init-Net. This is because the original problem \eqref{eq:OrgPhi} is nonconvex and nonsmooth, and therefore, it is important to choose a good initial for LAMA-Net. Since we will focus on the application of LAMA-Net on SVCT, we provide an example of such Init-Net designed for SVCT below. Then, we will show through experiments that the combination of this network and LAMA-Net, which we call iLAMA-Net, can further improve the solution quality. Because LAMA-Net is a general method applicable to dual-domain reconstruction, one can build an iLAMA-Net by integrating a different initialization network for a specific application with a similar principle.

\subsection{Initialization Network}
\label{subsect:init}
In this subsection, we show the Init-Net to improve the performance of LAMA-Net for SVCT reconstruction. 
We let $\sbf$ be the full-view sinogram data of an image $\xbf$, and $\sbf_0$ be the sparse-view data acquired during the scan. Suppose $\Abf$ is the discrete Radon transform and $\Pbf_{0}$ is the binary matrix representing the selection of the sparse-view data $\sbf_0$, then we have:
\begin{equation}
\label{eq:Ax-s}
    \sbf = \Abf \xbf \quad \mbox{and} \quad \Pbf_{0} \sbf = \Pbf_{0} \Abf \xbf = \sbf_0.
\end{equation}

Recall that the core technique here is to build a deep neural network $\bar{\Psi}_{\omega}$ with parameters denoted by $\omega$, such that it can map any sparse-view data $\sbf_0$ to $\bar{\Psi}_{\omega}(\sbf_0)$, the FBP of a pseudo full-view sinogram, such that $\bar{\Psi}_{\omega}(\sbf_0)$ is close to the corresponding ground truth image $\xbf$. 
The structure of $\bar{\Psi}_{\omega}$ and the procedure to learn the parameter $\omega$ will be explained below.

Suppose the sparse-view scan only acquires $1/p$ of the full-view data, where $p \in \Nbb$ is the downsampling rate, then the full-view sinogram $\sbf$ can be rearranged into $\sbf = [\sbf_0,\sbf_1,\dots,\sbf_{p-1}]$. Here the sparse-view sinogram data $\sbf_0=[\sbf_0^{1},\sbf_0^{2},\dots,\sbf_0^{V}]$ contains a total of $V$ views, and $\sbf_0^j$ is the measurement vector obtained by the detector at angle $\vartheta_{jp}$ for partitions indexed by $j=1,\dots,V$. See Figure \ref{fig:ct} for demonstration using $p=4$,
where $\Delta \vartheta = (\vartheta_1-\vartheta_0) = \dots = (\vartheta_p - \vartheta_{p-1})$ is the angle between two adjacent angles in a full-view sinogram. 
\begin{figure}
    \centering
    \includegraphics[width=.5\textwidth]{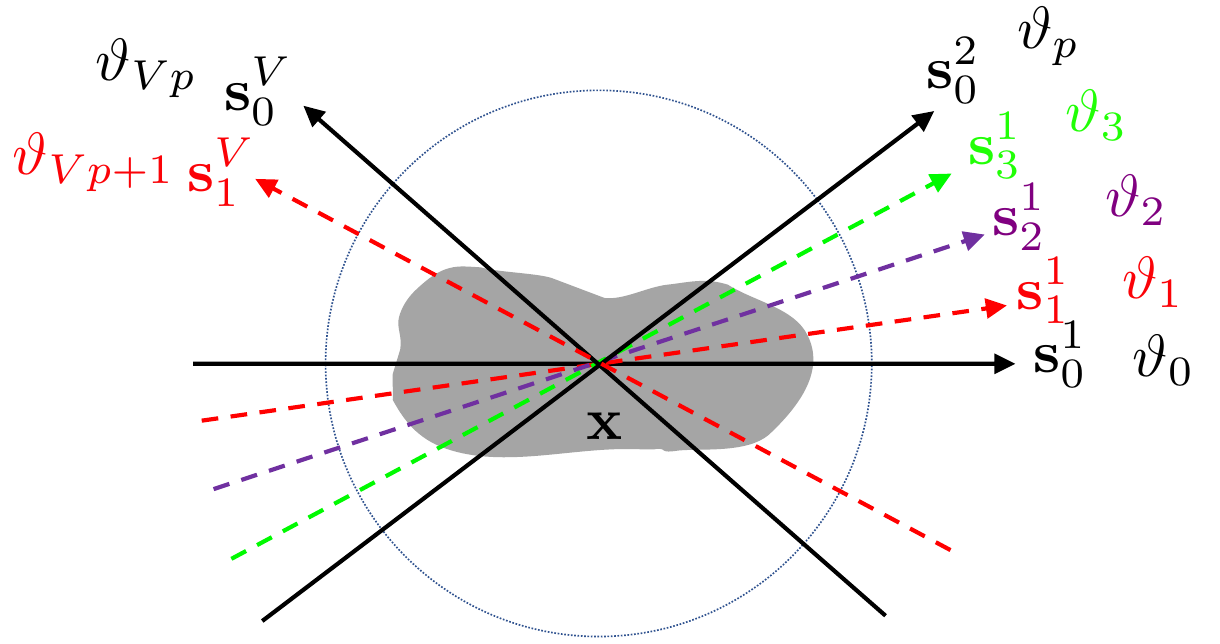}
    \caption{Demonstration of recursive relations between measurement data ${\sbf}$ and view angles $\vartheta$. The gray area indicates the subject ${\xbf}$. Solid black arrows show the angles acquired in the sparse-view CT scan, and dash red/purple/green arrows show the skipped views. Here ${\sbf}=[{\sbf}_{0},\dots,{\sbf}_{p-1}]$ is the full-view data for $p=4$, ${\sbf}_0=[{\sbf}_0^1,\dots,{\sbf}_0^{V}]$ is the sparse-view data containing the measurement vectors at a total of $V$ views, and ${\sbf}_0^j$ is the measurement vector obtained by the detector at angle $\vartheta_{jp}$ for partitions $j=1,\dots,V$.}
    \label{fig:ct}
\end{figure}
Let $\Pbf_{i}$ be the selection matrix corresponding to $\sbf_i$, i.e., $\Pbf_{i} \sbf = \sbf_i$, for $i=1,\dots,p$. Then we have the following relation between $\sbf_0$ and $\sbf_1$:
\begin{equation}
\label{eq:s1-s0}
\sbf_1 = \Pbf_1 \sbf = \Pbf_1 \Abf \xbf \approx \Pbf_1 \Abf \bar{\Psi}_{\omega}(\sbf_0) = \Psi_{\omega}(\sbf_0), 
\end{equation}
where ${\Psi}_{\omega} := \Pbf_1 \Abf \bar{\Psi}_{\omega}$. Then ${\Psi}_{\omega}$ is essentially the mapping from $\sbf_0$ to $\sbf_1$, which is the building block of the initialization network $\bar{\Psi}_\omega$.
To this end, we will use a CNN $\Psi_{\omega}$ to directly approximate the mapping from $\sbf_0$ to $\sbf_1$ and use it to construct $\bar{\Psi}_{\omega}$ later.

Note that if the CT scanner undersamples the sinogram starting from the angle $\vartheta_1$ rather than $\vartheta_0$ in Figure \ref{fig:ct}, then $\sbf_1$ is obtained instead.
This is equivalent to using the same scan pattern in Figure \ref{fig:ct} but with the subject $\xbf$ rotated \textit{clockwise} for angle $\Delta \vartheta$ in the scan.
As such, we should have $\sbf_2 \approx \Psi_{\omega}(\sbf_1)$ for the same reason of \eqref{eq:s1-s0} because $\sbf_2$ to $\sbf_1$ is as $\sbf_1$ to $\sbf_0$ now. 
In other words, $\Psi_{\omega}$ approximates the mapping from a sparse-view sinogram data to the one rotated counterclockwise for $\Delta \vartheta$.
Hence, we expect to have the following recurrent relation hold approximately:
\begin{equation}
    \label{eq:chain-relation}
    \sbf_{0} \stackrel{\Psi_{\omega}}{\longmapsto} \sbf_1 \stackrel{\Psi_{\omega}}{\longmapsto} \cdots \stackrel{\Psi_{\omega}}{\longmapsto} \sbf_{p-1} \stackrel{\Psi_{\omega}}{\longmapsto} \sbf_p:=[\sbf_0^2,\cdots,\sbf_0^V,\sbf_0^1]
\end{equation}
In practice, we use a set of $N$ ground truth full-view sinogram data $\{\sbf^{(n)}: n\in [N]\}$ and train a CNN $\Psi_{\omega}$ with parameter $\omega$ by minimizing
\begin{equation}
    \label{eq:train-Psi}
    \min_{\omega}\ (1/pN)\cdot \textstyle\sum_{n=1}^{N}\sum_{i=1}^{p} \| \Psi_{\omega} (\sbf_{i-1}^{(n)}) - \sbf_{i}^{(n)}\|^2.
\end{equation}
After training, we obtain $\Psi_\omega$, which can map a sparse-view data $\sbf_0$ to a pseudo full-view sinogram $\zbf_\omega(\sbf_0)$, and define the initialization $\bar{\Psi}_\omega$ as follows: 
\begin{align}
    \label{eq:init-net}
    \zbf_\omega(\sbf_0) := [\sbf_0,\Psi_{\omega}(\sbf_0),\dots,\Psi_{\omega}^{(p-1)}(\sbf_0)] \quad \mbox{and} \quad
    \bar{\Psi}_{\omega}(\sbf_0) := \text{FBP}(\zbf_\omega(\sbf_0))
\end{align}
where $\Psi_{\omega}^{(i)}$ stands for the composition of $\Psi_{\omega}$ for $i$ times.
Then, from a sparse-view sinogram data $\sbf_0$, the sinogram-image pair $(\zbf_\omega(\sbf_0), \bar{\Psi}_{\omega}(\sbf_0))$ is obtained and can serve as the input of LAMA-Net. We call this mapping $\sbf_0 \mapsto (\zbf_\omega(\sbf_0), \bar{\Psi}_{\omega}(\sbf_0))$ the Init-Net.
This recursive approach in Init-Net is highly effective since it leverages the geometric structure of sparse-view sinograms, where neighboring projections exhibit stronger correlations, and thus produces good initials for LAMA-Net. More experimental details can be found in Table \ref{tab:init}.

\subsection{Architecture of LAMA-Net and iLAMA-Net}
\label{subsect:lama-net}

As mentioned before, the architecture of the LAMA-Net follows LAMA exactly in the sense that each phase of LAMA-Net is just a realization of an iteration of LAMA shown in Algorithm \ref{alg:LAMA}. An illustration of LAMA-Net is given in Figure \ref{fig:diagram}. 
With the Init-Net introduced in Section \ref{subsect:init} attached to the front of LAMA-Net, it becomes the variant iLAMA-Net.
We remark that LAMA-Net/iLAMA-Net inherits all the convergence properties of LAMA, making the solution more stable than manually designed unrolling networks. 
Moreover, LAMA-Net/iLAMA-Net effectively leverages complementary information through the inter-domain connections and is memory-efficient as the network parameter $\Theta$ is shared across all phases. The networks \( g_R \) and \( g_Q \) each comprise four convolutional layers with 32 output channels, stride 1, and activation \( a(\cdot) \).  The kernel sizes for \( g_R \) and \(g_Q\) are \( 3 \times 3 \) and \( 3 \times 15 \) with padding 1 and \( 1 \times 7 \) respectively, preserving spatial resolution. The Init-Net consists of 3 sequential blocks with skip connection, each containing four convolutional layers with kernels of size $3 \times 15$, stride 1, and padding $1 \times 7$, where all layers use ReLU activation. Rectangular kernels outperform their square counterparts in this context, likely because they capture more contextual information for each view, potentially aligning with the geometry of the CT data acquisition process.

\begin{figure}
     \centering
     \subfloat[\label{fig:diagram-main}]{%
     \includegraphics[width=\textwidth]{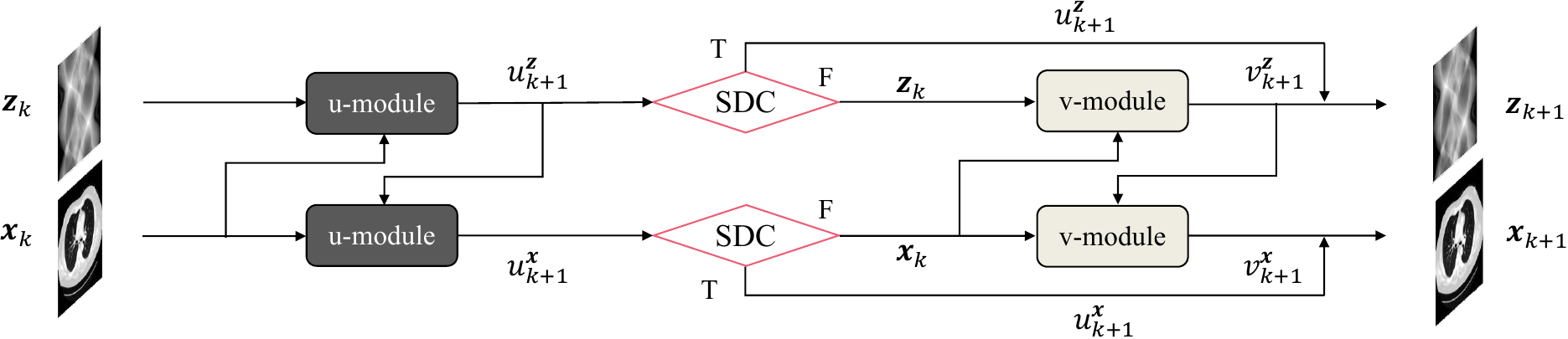}
         } 
     \hfill
     \subfloat[\label{fig:diagram-net}]{
         \centering
         \includegraphics[width=0.8\textwidth]{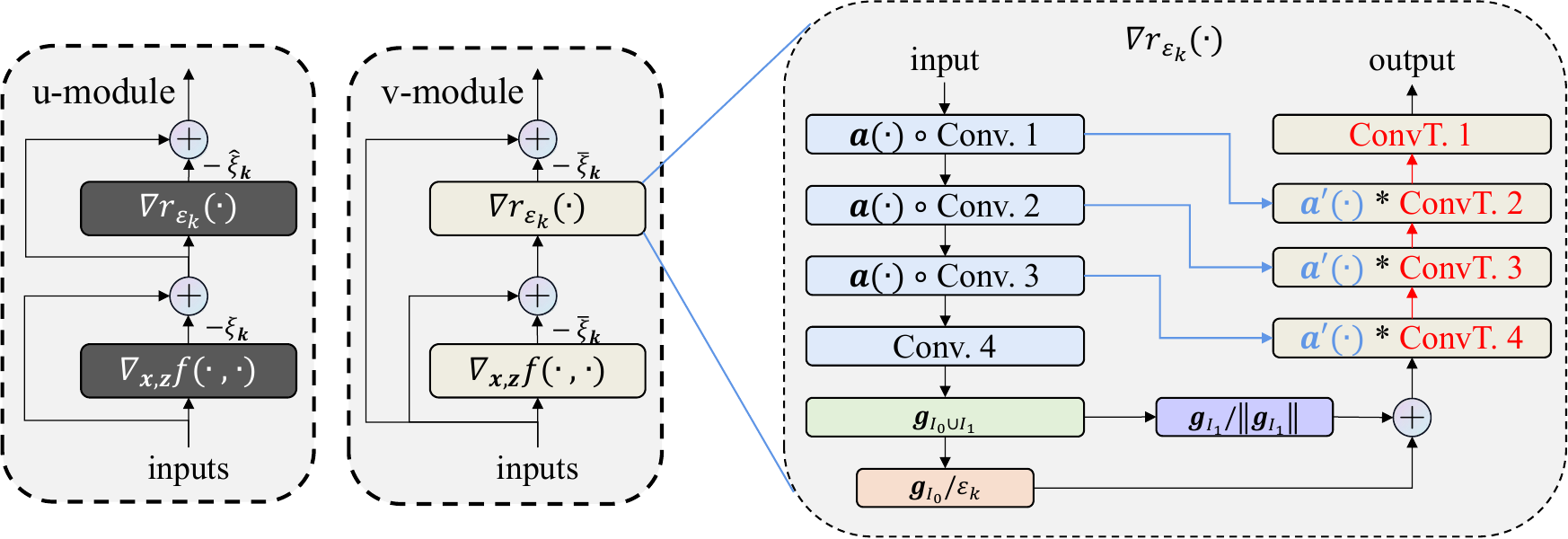}
         }
     \caption{Structure of one LAMA phase. (a) Schematic illustration of one phase in LAMA, where u-module employs residual learning architecture and v-module safeguards convergence. The $k$th phase has $(\xbf_k, \*z_k)$ as input and $(\xbf_{k+1}, \*z_{k+1})$ as output. LAMA is constructed by $K$ such phases. (b) Structures of the u-module, v-module, and the unrolling of the gradient over smoothed regularization operator $\nabla r_{\varepsilon_k}$ defined in equation~\eqref{eq:gradsmoothedR}. Note that $\xi$ denotes either $\alpha$ or $\beta$. $f$ is the data fidelity and data consistency function defined in equation~\eqref{eq:datafidelity} while $\nabla_{\xbf,\*z}f$ denote the gradient of $f$ w.r.t. either $\xbf$ or $\*z$. Conv. $i$ and ConvT. $i$ denote the convolutional layer and convolution transpose with weights of the $i^{th}$ corresponding instance, respectively. $a(\cdot)$ is the smoothed activation function, and $a'(\cdot)$ is the derivative of $a(\cdot)$. The unrolling of $\nabla r_{\varepsilon_k}$ into a neural network that exactly follows the mathematical formulation.}
     \label{fig:diagram}
\end{figure}


\section{Numerical Tests}

\label{sec:results}
We demonstrate the performance of the proposed method LAMA on dual-domain SVCT reconstruction. In this case, we use two well-recognized benchmark CT datasets: (i) the 2016 NIH-AAPM-Mayo (AAPM-Mayo) Clinic Low-Dose CT Grand Challenge dataset and (ii) the National Biomedical Imaging Archive (NBIA) dataset. More details of data and noise generation can be found below.

\subsection{Experiment Setup}

Among recent DL-based methods for comparison purposes, DDNet \cite{b8}, and LDA \cite{b15}  are single-domain methods, while DuDoTrans \cite{b42} and Learn++ \cite{b14} are dual-domain methods. Notably, except for DuDoTrans, dual-domain methods have proven very effective for SVCT reconstruction. This indicates that leveraging reconstructions in image and measurement domains can improve reconstruction quality. Our approach, LAMA-Net, is a mathematically unrolled method to building dual-domain reconstruction networks with convergence guarantee in theory and improved performance in practice. While the convergence has been demonstrated in Section \ref{sec:convergenceproof}, the empirical improvements will be shown below, such as Table \ref{n_param} and Figure \ref{fig:diagram}. 
These significant improvements are due to the better interpretability and optimization algorithmic design of the optimization method by LAMA-Net.

To implement the Init-Net in Section \ref{subsect:init}, we employ the simple convolutional neural network architecture described in Section \ref{subsect:lama-net} and the training data to learn its parameters. To implement LAMA-Net, we strictly follow the architecture in Section \ref{subsect:lama-net} and run the algorithm in Algorithm \ref{alg:LAMA}. We use the standard network training process on a 3-phase LAMA-Net to obtain its parameters, denoted by $\Theta_3$. Then, we increase the phase number of LAMA-Net to 5 and $\Theta=\Theta_3$ to start this new round of training to obtain an improved $\Theta = \Theta_5$. This process is repeated multiple times: we increase the phase number of LAMA-Net by two and use $\Theta$ obtained from the previous round for an updated $\Theta$. We often proceed with this until LAMA-Net reaches 15 phases because, after that, we do not observe much noticeable improvement in LAMA-Net. This recursive approach for training unrolling-type networks is particularly efficient, as shown in \cite{b11}. This training process is consistently used in all of our experiments.

In the experiments throughout this work, we randomly select 500 and 200 image-sinogram pairs from AAPM-Mayo and NBIA, respectively. The dataset is prepared by first applying the downsampling mask defined in \eqref{eq:datafidelity}, to the full-view ground truth sinograms \(\*s\). This process generates the zero-filled sparse-view sinograms \(\*s_0\). Subsequently, the corresponding FBP reconstruction iamge \(\text{FBP}(\*s_0)\) is computed from the sparse-view sinogram \(\*s_0\). For the ground truth image \(\hat{\*x}\), the FBP algorithm is applied directly to the full-view ground truth sinogram \(\*s\).
We randomly split the data in each dataset into two parts: 80\% for training and 20\% for testing. We call $N$ the number of training data pairs.
We evaluate algorithms using the peak signal-to-noise ratio (PSNR), structural similarity (SSIM) \cite{b43}, and the number of parameters in all compared networks. 
The sinogram has 512 detector elements and 1024 evenly distributed full projection views. 

In this SVCT experiment, the 1024 sinogram views are downsampled into 64 ($6.25\%$) or 128 ($12.5\%$) views to reduce the harmful X-ray dose to patients.
The full image size is $256\times256$, and we simulate projections and back-projections in fan-beam geometry using distance-driven algorithms \cite{b44} implemented in a PyTorch-based library CTLIB \cite{b45}.

Given $N$ training data pairs $\{( \*s_0^{(i)}, \hat{\xbf}^{(i)})\}_{i=1}^N$, where $\*s_0^{(i)}$ is sparse-view CT data and $\hat{\xbf}^{(i)}$ is the corresponding ground truth image, the loss function for learning the regularization parameter $\Theta$ is defined as:
\begin{equation}
\begin{split}
    \mathcal{L}(\Theta) = \frac 1 N\sum_{i=1}^N \norm{\xbf_{K}^{(i)}-\hat{\xbf}^{(i)}}^2 + \norm{\*z_{K}^{(i)}-\*A\hat{\xbf}^{(i)}}^2 + \mu\left(1-\text{SSIM}\big({\xbf}_{K}^{(i)}, \hat{\xbf}^{(i)}\big)\right),
\end{split}
\label{Loss_func}
\end{equation}
where $\mu$ is the weight for SSIM loss set as $0.01$ for all experiments, and $(\xbf_{K}^{(i)}, \*z_{K}^{(i)}) \defeq \text{LAMA}(\xbf_0:=\text{FBP}(\*s_0^{(i)}),\*z_0:=\*s_0^{(i)})$ by applying LAMA for a prescribed $K$ iterations ($K$ set to 15 in our experiments).

The above setting applies to training LAMA-Net without any initialization networks. To train the initialization network \(\bar{\Psi}_\omega\) (or equivalently $\Psi_\omega$), we provide the sparse-view sinograms \(\*s_0\) as inputs and the true full-view sinogram $\*s$ as ground truth. Then we train Init-Net in a supervised manner for 100 epochs using ADAM \cite{b46} with learning rate $10^{-4}$ to minimize the loss function defined in Equation \eqref{eq:train-Psi}. Once trained, the initialization network produces the refined sinogram-image pairs \(\{(\zbf_\omega(\sbf_0^{(i)}), \bar{\Psi}_{\omega}(\sbf_0^{(i)}))\}_{i=1}^N\), which is then provided as input to LAMA-Net. Then LAMA-Net with initialization (i.e. iLAMA-Net) is trained in a supervised manner with \((\xbf_{K}^{(i)}, \*z_{K}^{(i)}) \defeq \text{LAMA}(\xbf_0 := \bar\Psi_\omega(\sbf_0^{(i)}), \*z_0 := \zbf_\omega(\sbf_0^{(i)}))\). Note that the initialization network and LAMA-Net are trained based on the same data set but are trained independently to ensure no information leakage.

We use the ADAM optimizer \cite{b46} with learning rates of 1e-4 and 6e-5 for the image and sinogram networks, respectively, and train them with the recursive approach. All the other parameters are set to the default of ADAM. The training starts with three phases for 300 epochs, then adding two phases for 200 epochs each time until the number of phases reaches 15. The algorithm is implemented in PyTorch-Python version 1.10. Our experiments were run on a Linux server with an 80G NVIDIA A100 Tensor Core GPU. 

We show experimental results on two different testing setups using the two datasets above: The first test compares reconstruction quality directly to several state-of-the-art deep reconstruction networks, and the second test verifies LAMA's excellent stability connected to its convergence property.

\begin{table}[htb]
\centering
\caption{Number of parameters for different methods. Three networks with the minimal number of parameters are noted in red.}
    \begin{tabular}{ccccccc}
    \toprule
        Methods  &  DDNet & LDA & DuDoTrans & LEARN++ & LAMA-Net & iLAMA-Net \\
         \midrule
        \# Parameters & 6e5 & \red{6e4} & 8e6 & 6e6 & \red{3e5} & \red{4e5} \\
        \bottomrule
    \end{tabular}
\label{n_param}
\end{table}

\begin{table}[htb]
  \centering
  \caption{Quantitative results (AAPM-Mayo) by PSNR, SSIM, and Sinogram RMSE with 64 and 128 views. The RMSE is on the scale of $10^{-3}$. The best results in each column are in red.}
  \scalebox{0.7}{\parbox{\textwidth}{
  \begin{tabular}{ccccccc}
    \toprule
            \multirow{2}{*}{\textbf{Methods}}  &
            \multicolumn{3}{c}{$\*{64}$}  & \multicolumn{3}{c}{$\*{128}$} \\ &
            \multicolumn{1}{c}{PSNR} & \multicolumn{1}{c}{SSIM} & \multicolumn{1}{c}{RMSE ($10^{-3}$)} & \multicolumn{1}{c}{PSNR} & \multicolumn{1}{c}{SSIM} & \multicolumn{1}{c}{RMSE ($10^{-3}$)}\\
    \midrule
    FBP          & $27.17\pm1.11$ & $0.596\pm\expnum{9.0}{-4}$ & $19.32\pm\expnum{6.04}{-2}$ & $33.28\pm0.85$ & $0.759\pm\expnum{1.1}{-3}$ & $6.88\pm\expnum{1.57}{-4}$\\
    DDNet \cite{b8}       & $35.70\pm 1.50$ & $0.923\pm\expnum{3.9}{-4}$ & $7.53\pm\expnum{1.96}{-2}$ & $42.73\pm1.08$ & $0.974\pm\expnum{3.9}{-5}$ & $3.29\pm\expnum{1.31}{-3}$\\
    LDA \cite{b15}         & $37.16 \pm 1.33$ & $0.932\pm \expnum{2.0}{-4}$ & $3.31\pm\expnum{4.86}{-4}$ & $43.00 \pm 0.91$ & $0.976\pm\expnum{1.9}{-5}$ & $1.09\pm\expnum{7.26}{-6}$\\
    DuDoTrans \cite{b42}  & $37.90\pm1.44$ & $0.952\pm\expnum{1.0}{-4}$ & $2.42\pm\expnum{5.94}{-5}$ & $43.48\pm1.04$ & $0.985\pm\expnum{9.5}{-6}$ & $0.97\pm\expnum{7.51}{-6}$\\
    LEARN++ \cite{b14}    & $43.02\pm2.08$ & $0.980\pm\expnum{3.2}{-5}$ & $0.81\pm\expnum{2.43}{-5}$ & $49.77\pm0.96$ & $0.995\pm\expnum{1.1}{-6}$ & $0.31\pm\expnum{1.72}{-6}$\\
    LAMA-Net (Ours)   & $44.58\pm 1.15$ & $0.986\pm\expnum{7.1}{-6}$ & $0.68\pm\expnum{9.97}{-6}$ & $50.01 \pm 0.69$ & $0.995\pm\expnum{6.0}{-7}$ & $0.32\pm\expnum{1.66}{-6}$\\
    iLAMA-Net (Ours)    & $\red{46.37\pm 0.99}$ & $\red{0.990\pm\expnum{3.7}{-6}}$ & $\red{0.52\pm\expnum{4.68}{-6}}$ & $\red{51.02 \pm 0.63}$ & $\red{0.996\pm\expnum{3.7}{-7}}$ & $\red{0.27\pm\expnum{7.13}{-7}}$\\
    \bottomrule
  \end{tabular}
  }}
  \label{tab:comparison-TCIA}
\end{table}

\begin{table}[htb]
\centering
  \caption{Quantitative results (NBIA) by PSNR, SSIM, and Sinogram RMSE with 64 and 128 views. The RMSE is on the scale of $10^{-3}$. The best results in each column are in red.}
\scalebox{0.7}{\parbox{\textwidth}{
  \begin{tabular}{ccccccc}
    \toprule
    \multirow{2}{*}{\textbf{Methods}}  &
    \multicolumn{3}{c}{$\*{64}$}  & \multicolumn{3}{c}{$\*{128}$} \\ &
    \multicolumn{1}{c}{PSNR} & \multicolumn{1}{c}{SSIM} & \multicolumn{1}{c}{RMSE ($10^{-3}$)} & \multicolumn{1}{c}{PSNR} & \multicolumn{1}{c}{SSIM} & \multicolumn{1}{c}{RMSE ($10^{-3}$)}  \\
    \midrule
    FBP         & $25.72\pm1.93$ & $0.592\pm\expnum{1.6}{-3}$ & $29.02\pm0.26$ & $31.86\pm1.27$ & $0.743\pm\expnum{1.7}{-3}$ & $7.51\pm\expnum{4.60}{-3}$\\
    DDNet\cite{b8}       & $35.59\pm2.76$ & $0.920\pm\expnum{2.7}{-4}$ & $7.26\pm\expnum{2.68}{-2}$ & $40.23\pm1.98$ & $0.961\pm\expnum{1.2}{-4}$ & $1.45\pm\expnum{7.79}{-5}$\\
    LDA  \cite{b15}       & $34.31\pm2.20$ & $0.896\pm\expnum{3.9}{-4}$ & $6.43\pm\expnum{1.85}{-3}$ & $40.26\pm2.57$ & $0.963\pm\expnum{1.3}{-4}$ & $1.76\pm\expnum{6.93}{-5}$\\
    DuDoTrans \cite{b42}  & $35.53\pm2.63$ & $0.938\pm\expnum{2.4}{-4}$ & $2.40\pm\expnum{2.63}{-4}$ & $40.67\pm2.84$ & $0.976\pm\expnum{6.2}{-5}$ & $1.06\pm\expnum{4.21}{-5}$\\
    LEARN++  \cite{b14}   & $38.53\pm3.41$ & $0.956\pm\expnum{2.3}{-4}$ & $1.52\pm\expnum{1.13}{-4}$ & $43.35\pm4.02$ & $0.983\pm\expnum{5.3}{-5}$ & $0.73\pm\expnum{2.27}{-5}$\\
    LAMA-Net  (Ours)  & $41.40\pm 3.54$ & $0.976\pm\expnum{8.1}{-5}$ & $0.99\pm\expnum{3.54}{-5}$ & $45.20\pm4.23$ & $0.988\pm\expnum{3.0}{-5}$ & $0.55\pm\expnum{1.24}{-5}$\\
    iLAMA-Net (Ours)   & $\red{42.11\pm4.09}$ & $\red{0.979\pm\expnum{7.5}{-5}}$ & $\red{0.92\pm\expnum{1.79}{-5}}$ & $\red{47.28\pm5.13}$ & $\red{0.992\pm\expnum{1.5}{-5}}$ & $\red{0.44\pm\expnum{7.40}{-6}}$\\
    \bottomrule
  \end{tabular}}}
  \label{tab:comparison-NBIA}
\end{table}

\subsection{Comparison in reconstruction quality}
We evaluate LAMA-Net and iLAMA-Net by applying sparse-view sinograms and the ground truth images obtained by FBP from the test set. We test state-of-the-art (SOTA) methods: DDNet \cite{b8}, LDA \cite{b15}, DuDoTrans \cite{b42}, Learn++ \cite{b14}. 
\begin{figure}[htb]
\centering
\includegraphics[width=\linewidth]{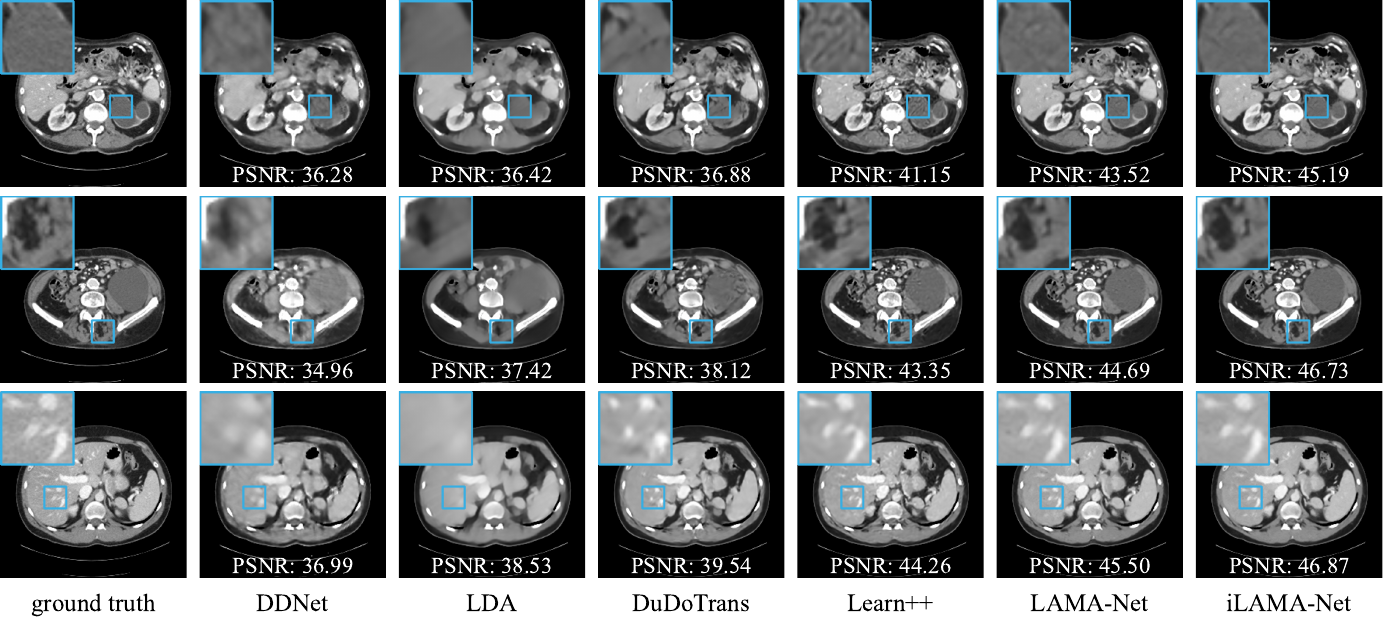}
\caption{Visual comparison using 64-view siongrams for AAPM-Mayo dataset.}
\label{fig:results-mayo}
\end{figure}

\begin{figure}[htb]
\centering

\includegraphics[width=\linewidth]{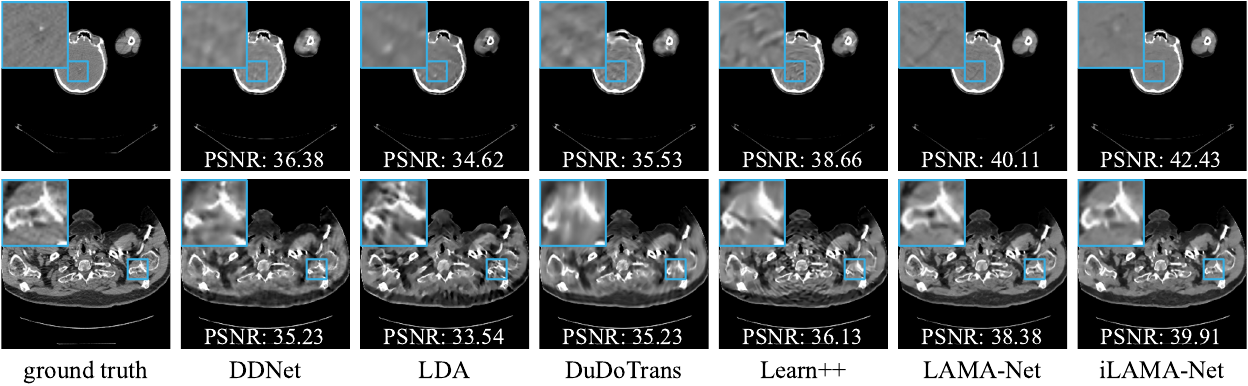}
\label{fig:results-NBIA}
\caption{Visual comparison using 64-view siongrams for NBIA dataset.}
\end{figure}
The reconstruction quality is evaluated and presented in Table \ref{tab:comparison-TCIA} for the AAPM-Mayo and NIBIA datasets, respectively. Note that we provide sinogram RMSE to demonstrate the effectiveness of LAMA as a dual-domain method that leverages complementary information.
Moreover, the network sizes of these compared networks are shown in Table \ref{n_param}. 
Example reconstructed images (with zoom-ins) of these methods on the two datasets are shown in Figure \ref{fig:results-mayo}.

From Tables \ref{tab:comparison-TCIA} and \ref{tab:comparison-NBIA}, we can see LAMA-Net performs favorably against all SOTA deep reconstruction networks: it attained the highest PSNR and SSIM among all compared methods. These indicate that LAMA-Net is highly accurate in image reconstruction after training on small-size datasets. LAMA-Net also produces the smallest sinogram RMSE, proving that our method effectively exploits information from both domains. Notably, the subtle decrease in sinogram RMSE with increasingly accurate reconstruction yields substantial improvements in PSNR and SSIM metrics, highlighting the pivotal role of refinements in the observation data, namely, the sinogram. This observation not only underscores the efficacy of LAMA-Net in recovering the original observation but also suggests a proximity to a globally optimum solution.  
Meanwhile, LAMA-Net also uses a magnitude of fewer parameters, which demonstrates its parameter effectiveness and simplicity for training.
Figure \ref{fig:results-mayo} further demonstrates that LAMA-Net can effectively preserve structural details in reconstructed images when removing noise and artifacts.

\begin{table}[htb]
\centering
\caption{Comparison for different initialization methods with or without LAMA-Net for AAPM-Mayo dataset.}
\scalebox{0.75}{\parbox{\textwidth}{
\begin{tabular}{cccccccc}
    \toprule
        \multirow{2}{*}{\textbf{Methods}}  &
            \multicolumn{3}{c}{$\*{64}$}  & \multicolumn{3}{c}{$\*{128}$} \\
            & PSNR & SSIM & RMSE ($10^{-3}$) & PSNR & SSIM & RMSE ($10^{-3}$)\\
         \midrule
        FBP & $27.17 \pm 1.11$ & $0.596\pm\expnum{9.0}{-4}$ & $19.32\pm\expnum{6.04}{-2}$ & $33.28\pm0.85$ & $0.759\pm\expnum{1.11}{-3}$ & $6.87\pm\expnum{1.57}{-3}$\\
        CNN & $34.43 \pm 1.55$ & $0.916\pm \expnum{3.0}{-4}$ & $2.95\pm\expnum{2.42}{-4}$ & $41.30\pm1.75$& $0.975\pm\expnum{3.77}{-5}$ & $1.03\pm\expnum{4.39}{-5}$\\
        Init-Net & $37.14 \pm 1.56$ & $0.942 \pm \expnum{1.6}{-4}$ & $1.95\pm\expnum{1.13}{-4}$ & $43.61\pm1.14$ & $0.984 \pm \expnum{1.20}{-5}$ & $0.75\pm\expnum{1.50}{-5}$\\\\

        FBP + LAMA-Net & $44.58\pm 1.15$ & $0.986\pm\expnum{7.1}{-6}$ & $0.68\pm\expnum{9.97}{-6}$ & $50.01 \pm 0.69$ & $0.995\pm\expnum{6.0}{-7}$ & $0.32\pm\expnum{1.66}{-6}$\\
        CNN + LAMA-Net & $45.11 \pm 1.03$ & $0.987\pm\expnum{6.6}{-6}$ & $0.62 \pm \expnum{7.16}{-6}$ & $50.69\pm0.68$ & $0.995 \pm \expnum{4.3}{-7}$ & $0.29\pm \expnum{1.03}{-6}$\\
        iLAMA-Net & $\red{46.37\pm 0.99}$ & $\red{0.990\pm\expnum{3.7}{-6}}$ & $\red{0.52\pm\expnum{4.68}{-6}}$ & $\red{51.02 \pm 0.63}$ & $\red{0.996\pm\expnum{3.7}{-7}}$ & $\red{0.27\pm\expnum{7.13}{-7}}$\\
        \bottomrule
    \end{tabular}}}
\label{tab:init}
\end{table}

As we mentioned earlier, LAMA-Net can benefit significantly from a good initial estimate, given that the original problem \eqref{eq:OrgPhi} is nonconvex and nonsmooth. There are various methods to generate such an initial estimate, including using FBP of partial sinogram data or inpainting techniques applied to the partial data. In Section \ref{subsect:init}, we introduced Init-Net specifically designed for sparse-view CT (SVCT) applications. Init-Net incorporates a geometry-driven design tailored to the structure of sparse-view sinograms.

We conducted experiments to compare the solution quality of the direct outputs from three initialization methods—FBP of sparse-view sinogram, CNN-based inpainting, and our proposed Init-Net—as well as their performance when used as initial inputs to LAMA-Net. The CNN consists of 5 sequential blocks with skip connection, each containing four convolutional layers with kernels of size 3 × 3, stride 1, and padding 1 × 1, where all layers use ReLU activation, requiring more parameters than Init-Net. The CNN inpainting initialization network is trained on image data pairs $\{(\text{FBP}(\*s_0^{(i)}), \hat{\mathbf{x}}^{(i)})\}_{i=1}^N$ from the same data set for 100 epochs using ADAM \cite{b46} with learning rate $10^{-4}$, producing refined images $\{\mathbf{x}_{\text{CNN}}^{(i)}\}_{i=1}^N$. The corresponding LAMA-Net with CNN as the initialization network is then trained in the same supervised manner, where $(\mathbf{x}_{K}^{(i)}, \mathbf{z}_{K}^{(i)}) \coloneqq \text{LAMA}(\mathbf{x}_0 \coloneqq \mathbf{x}_{\text{CNN}}^{(i)}, \mathbf{z}_0 \coloneqq \mathbf{A}\mathbf{x}_{\text{CNN}}^{(i)})$.
Further details about the network architectures and parameters are provided in Section \ref{subsect:lama-net}, and the results are summarized in Table \ref{tab:init}. As shown in the table, iLAMA-Net, when combined with our specially designed Init-Net, consistently outperforms all other methods across different accuracy metrics and numbers of views in SVCT. Our method without an initialization network is referred to as LAMA-Net in Table \ref{tab:comparison-TCIA} and \ref{tab:comparison-NBIA}. In Table \ref{tab:init}, the variant with FBP initialization is denoted as FBP+LAMA-Net. Note that FBP+LAMA-Net and LAMA-Net are the same methods, with identical results, and the naming distinction is solely to clarify the initialization strategy in the ablation study.

While FBP and CNN-based inpainting are inherently simpler approaches, they struggle with large angular gaps in the data and fail to effectively suppress artifacts. This limitation is particularly evident for the CNN-based method, even though it employs more convolutional blocks than Init-Net. These results highlight the advantage of Init-Net's geometry-driven design in achieving sharper and more accurate reconstructions, confirming its suitability for initializing sparse-view sinograms in conjunction with LAMA-Net.

\begin{figure*}[htb]
    \centering
        \subfloat[\label{fig:CUSI1}]{\includegraphics[width=\linewidth]{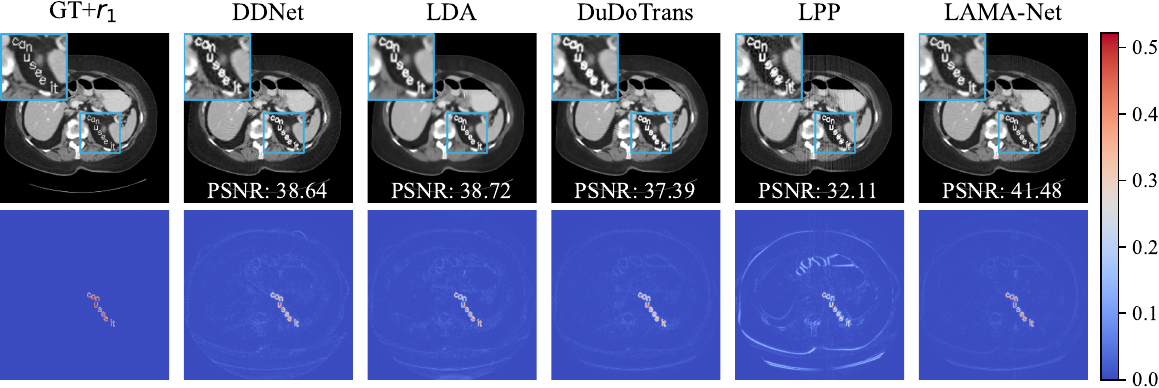}}\hfill
        \subfloat[\label{fig:CUSI2}]{\includegraphics[width=\linewidth]{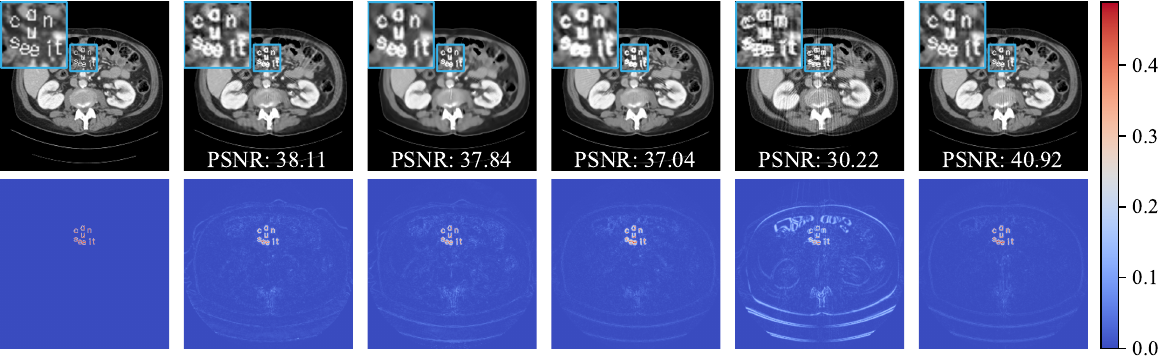}}
    \caption{Two examples of different algorithms using input with structured perturbation $r_1$. The first rows are reconstructed images with zoomed-in perturbed areas in the top left corner and annotated PSNR between the reconstructed image and perturbed ground truth GT+$r_1$. The second row shows the difference in heat maps between the reconstructed images and the unperturbed ground truth.}
    \label{fig:CUSI}
\end{figure*}

\subsection{Test on solution stability}

The goal of this section is to evaluate the stability of our algorithm against small perturbations, such as noise and structural changes (e.g., Gaussian noise or localized alterations like a tumor in a brain image). By stability, we refer to the ability of the method to accurately recover an image with the same perturbations that were initially introduced without imposing any additional unexpected local or global artifacts or noise. This aspect is particularly important in medical image reconstruction and deep learning applications \cite{b47}.

We follow the stability tests discussed in \cite{b47} and evaluate the compared methods under two types of perturbations: structured noise $r_1$ and Gaussian noise $r_2$. For the structured noise, we add the text ``can u see it" to highly structured areas in the ground truth CT images and then apply the same physical process to generate SVCT data. For Gaussian noise, they are added to the entire ground truth images.  Note that all compared methods, including LAMA-Net, are trained with clean data without applying any adversarial training or denoising strategy. The results are shown in Figures \ref{fig:CUSI} and \ref{fig:Gaussian}, respectively. 
\begin{figure}[htb]
    \centering
        \subfloat[\label{fig:Gaussian1}]{\includegraphics[width=\linewidth]{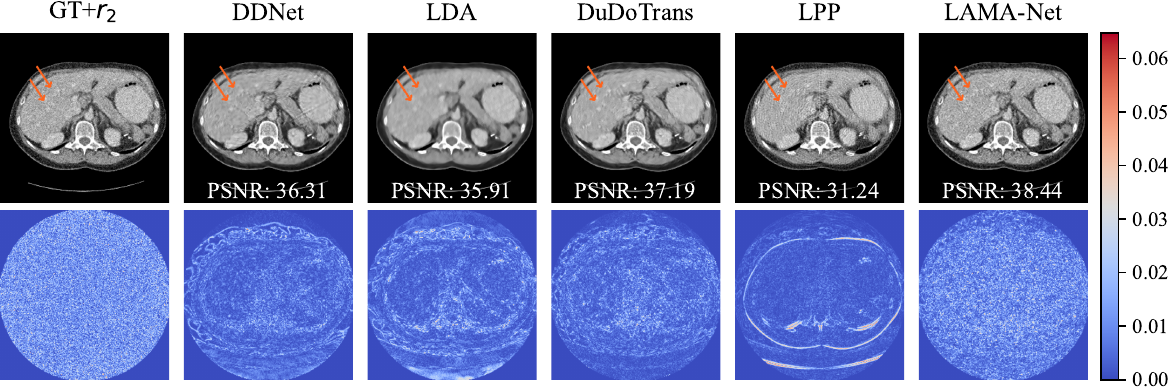}}
        \hfill
        \subfloat[\label{fig:Gaussian2}]{\includegraphics[width=\linewidth]{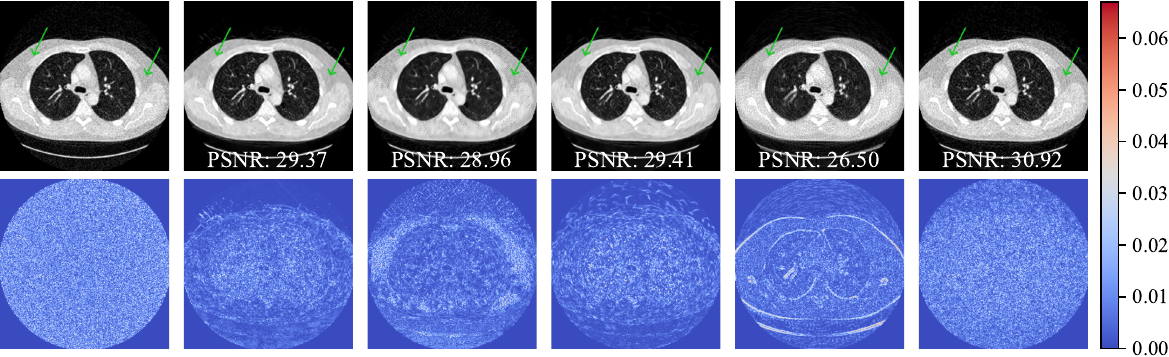}}
    \caption{Example results of reconstructed images with annotated PSNR between the reconstructed image and the perturbed ground truth GT+$r_2$, where $r_2$ is Gaussian noise. (a) $r_2$ is zero mean with a standard deviation of 0.03. (b) $r_2$ is zero mean with a standard deviation of 0.05.}
    \label{fig:Gaussian}
\end{figure}

These two figures show that LAMA-Net recovers the ground truth much more faithfully than other methods under perturbations $r_1$ and $r_2$.

%

\begin{figure}[htb]
    \centering
    \includegraphics[width=0.7\linewidth]{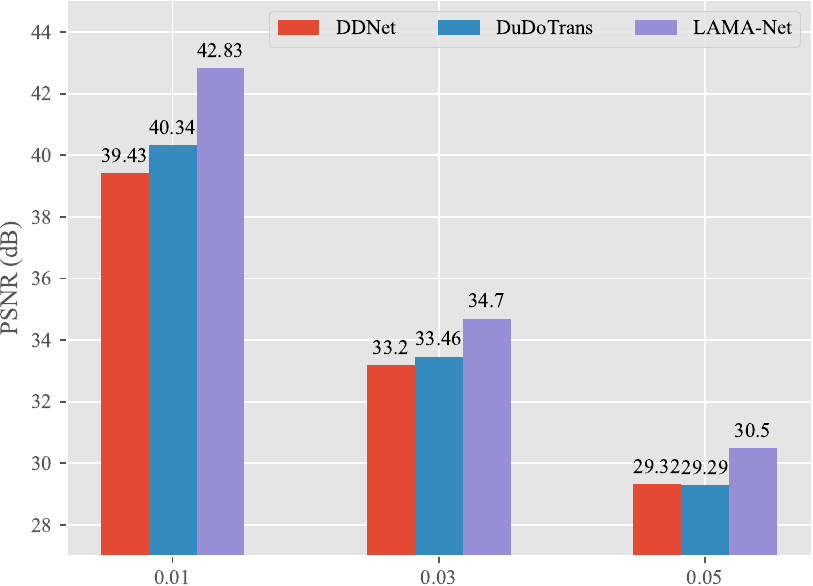}
    \caption{We pick the top-3 most stable algorithms (DDNet, DuDoTrans, LAMA-Net) and compare their reconstruction PSNR with perturbed ground truth by different Gaussian noise levels with standard deviation 0.01, 0.03 and 0.05, respectively, where the image pixel range is normalized between 0 and 1. }
    \label{fig:gaussian-comp}
\end{figure}

Our method demonstrates a notable capability in handling the structured noise $r_1$ ``can u see it". In Figure \ref{fig:CUSI}, the preservation of $r_1$ shows that LAMA-Net can treat outliers independently without introducing additional artifacts or instabilities.

This stability is evident in the faithfulness of features (mostly dark in the difference heatmap Figure \ref{fig:pert_reg} other than the perturbed area ``can u see it"), indicating the effectiveness of regularizers learned from unperturbed data in handling perturbations.
Notably, in both examples, LAMA-Net exhibits the smallest detail loss compared to other methods. This is evidenced by the noticeable retention of information, particularly highlighted by contours or edges in the difference heat map, which are clearly more apparent in other methods. Overall, other methods either show distortions not appearing in the original image or compromise small structures by over-smoothing. In contrast, delicate structures are still clearly visible in our proposed method. 
\begin{figure}[htb]
    \centering
    \includegraphics[width=0.8\linewidth]{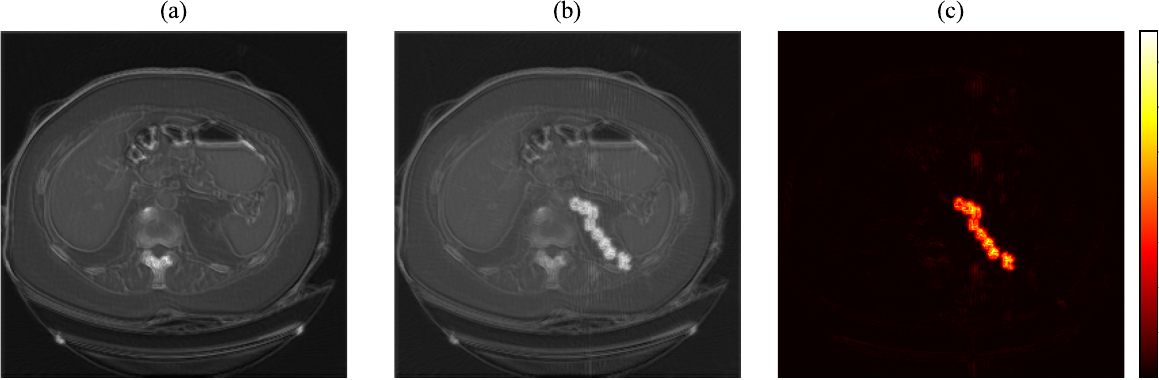}
 \caption{ (a) The $L^2$-norm of $g^R(\xbf)$ ($\norm{g^R(\xbf)}$), where the feature extractor $g^R$ is defined in Eq.~\eqref{eq:21norm} and learned from clean training data. (b) The $L^2$-norm of $g^R(\xbf+r_1)$ ($\norm{g^R(\xbf+r_1)}$), where $r_1$ is the perturbation added on $\xbf$. (c) The difference heatmap between (a) and (b). 
 }
    \label{fig:pert_reg}
\end{figure}
If the data is added with Gaussian noise $r_2$, note that in Figure \ref{fig:Gaussian}, both difference heat maps in the first column are the standard disk of Gaussian noise $r_2$ where no pattern or information related to the image can be observed. 
This characteristic is desirable as it signifies the algorithm's stability in effectively isolating and distinguishing noise from the essential information during reconstruction. In contrast, other methods such as DDNet, LDA, and DuDoTrans either tend to compromise small structures in Figure \ref{fig:Gaussian1} or excessively smooth certain areas of the reconstructed images in Figure \ref{fig:Gaussian2}, particularly evident in the areas pointed by the arrows. While this may enhance visual appeal, it poses a risk of information loss. This concern is further established by the difference heat map, revealing substantial disparity in the form of clearly visible highlighted contours and sharp edges, indicative of potential loss of critical details. LAMA also consistently produces more stable reconstructions for different noise levels than other methods shown in Figure \ref{fig:gaussian-comp}.

\section{Conclusion}
\label{sec:conclusion}

In this work, we provide a complete and rigorous convergence proof of LAMA developed in our prior work \cite{b2} and show that all accumulation points of a specified subsequence of LAMA must be Clarke stationary points of the dual-domain reconstruction problem. LAMA directly yields a highly interpretable neural network architecture called LAMA-Net. In addition, we demonstrate that the convergence property of LAMA indeed yields outstanding stability and robustness empirically. We also show that the performance of LAMA-Net can be further improved by integrating a properly designed network that generates suitable initials, which we call iLAMA-Net in short. To evaluate LAMA-Net/iLAMA-Net, we conduct several experiments and compare them with several state-of-the-art methods on popular benchmark datasets for Sparse-View Computed Tomography. The results of these experiments demonstrate the outstanding performance of LAMA-Net/iLAMA-Net when compared to other methods.

\section*{Acknowledgments}
This research is supported in part by National Science Foundation under grants DMS-2152960, DMS-2152961, DMS-2307466, and DMS-2409868.

\bibliographystyle{abbrv}
\bibliography{sn-bibliography}

\begin{thebibliography}{10}

\bibitem{b47}
V.~Antun, F.~Renna, C.~Poon, B.~Adcock, and A.~C. Hansen.
\newblock On instabilities of deep learning in image reconstruction and the potential costs of ai.
\newblock {\em Proceedings of the National Academy of Sciences}, 117(48):30088--30095, 2020.

\bibitem{b27}
J.~Bolte, S.~Sabach, and M.~Teboulle.
\newblock Proximal alternating linearized minimization for nonconvex and nonsmooth problems - mathematical programming, Jul 2013.

\bibitem{b13}
H.~Chen, Y.~Zhang, Y.~Chen, J.~Zhang, W.~Zhang, H.~Sun, Y.~Lv, P.~Liao, J.~Zhou, and G.~Wang.
\newblock Learn: Learned experts' assessment-based reconstruction network for sparse-data ct.
\newblock {\em IEEE Transactions on Medical Imaging}, 37(6):1333--1347, 2018.

\bibitem{b16}
H.~Chen, Y.~Zhang, M.~K. Kalra, F.~Lin, Y.~Chen, P.~Liao, J.~Zhou, and G.~Wang.
\newblock Low-dose ct with a residual encoder-decoder convolutional neural network.
\newblock {\em IEEE Transactions on Medical Imaging}, 36(12):2524--2535, 2017.

\bibitem{b15}
Y.~Chen, H.~Liu, X.~Ye, and Q.~Zhang.
\newblock Learnable descent algorithm for nonsmooth nonconvex image reconstruction.
\newblock {\em SIAM Journal on Imaging Sciences}, 14(4):1532--1564, 2021.

\bibitem{b38}
I.~Chun, Z.~Huang, H.~Lim, and J.~A. Fessler.
\newblock Momentum-net: Fast and convergent iterative neural network for inverse problems.
\newblock {\em IEEE Transactions on Pattern Analysis; Machine Intelligence}, 45(04):4915--4931, apr 2023.

\bibitem{b35}
I.~Y. Chun and J.~A. Fessler.
\newblock Convolutional analysis operator learning: Acceleration and convergence.
\newblock {\em IEEE Transactions on Image Processing}, 29:2108--2122, 2019.

\bibitem{liu2023omni}
Y.~Cui, W.~Ren, and A.~Knoll.
\newblock Omni-kernel network for image restoration.
\newblock {\em Proceedings of the AAAI Conference on Artificial Intelligence}, 38(2):1426--1434, Mar. 2024.

\bibitem{lin2023selective}
Y.~Cui, Y.~Tao, Z.~Bing, W.~Ren, X.~Gao, X.~Cao, K.~Huang, and A.~Knoll.
\newblock Selective frequency network for image restoration.
\newblock In {\em The Eleventh International Conference on Learning Representations}, 2023.

\bibitem{b2}
C.~Ding, Q.~Zhang, G.~Wang, X.~Ye, and Y.~Chen.
\newblock Learned alternating minimization algorithm for dual-domain sparse-view ct reconstruction.
\newblock In {\em Medical Image Computing and Computer Assisted Intervention -- MICCAI 2023}, pages 173--183, Cham, 2023. Springer Nature Switzerland.

\bibitem{b26}
R.~Ge, Y.~He, C.~Xia, H.~Sun, Y.~Zhang, D.~Hu, S.~Chen, Y.~Chen, S.~Li, and D.~Zhang.
\newblock Ddpnet: A novel dual-domain parallel network for low-dose ct reconstruction.
\newblock In L.~Wang, Q.~Dou, P.~T. Fletcher, S.~Speidel, and S.~Li, editors, {\em Medical Image Computing and Computer Assisted Intervention -- MICCAI 2022}, pages 748--757, Cham, 2022. Springer Nature Switzerland.

\bibitem{b36}
A.~Gkillas, D.~Ampeliotis, and K.~Berberidis.
\newblock Connections between deep equilibrium and sparse representation models with application to hyperspectral image denoising.
\newblock {\em IEEE Transactions on Image Processing}, 32:1513--1528, 2023.

\bibitem{b9}
K.~Hornik, M.~Stinchcombe, and H.~White.
\newblock Multilayer feedforward networks are universal approximators.
\newblock {\em Neural networks}, 2(5):359--366, 1989.

\bibitem{b17}
K.~H. Jin, M.~T. McCann, E.~Froustey, and M.~Unser.
\newblock Deep convolutional neural network for inverse problems in imaging.
\newblock {\em IEEE Transactions on Image Processing}, 26(9):4509--4522, 2017.

\bibitem{b46}
D.~P. Kingma and J.~Ba.
\newblock Adam: A method for stochastic optimization, 2017.

\bibitem{b5}
H.~Lee, J.~Lee, H.~Kim, B.~Cho, and S.~Cho.
\newblock Deep-neural-network-based sinogram synthesis for sparse-view ct image reconstruction.
\newblock {\em IEEE Transactions on Radiation and Plasma Medical Sciences}, 3(2):109--119, 2018.

\bibitem{b10}
S.~Liang and R.~Srikant.
\newblock Why deep neural networks for function approximation?
\newblock In {\em International Conference on Learning Representations (ICLR)}, 2017.

\bibitem{lukic2024moment}
T.~Luki\'{c} and P.~Bal\'{a}zs.
\newblock Moment preserving tomographic image reconstruction model.
\newblock {\em Image Vision Comput.}, 146(C), June 2024.

\bibitem{lukic2016binary}
T.~Lukić and P.~Balázs.
\newblock Binary tomography reconstruction based on shape orientation.
\newblock {\em Pattern Recognition Letters}, 79:18--24, 2016.

\bibitem{lukic2022limited}
T.~Lukić and P.~Balázs.
\newblock Limited-view binary tomography reconstruction assisted by shape centroid.
\newblock {\em The Visual Computer}, 38(2):695--705, 2022.

\bibitem{b44}
B.~D. Man and S.~Basu.
\newblock Distance-driven projection and backprojection in three dimensions.
\newblock {\em Physics in Medicine and Biology}, 49(11):2463--2475, May 2004.

\bibitem{b12}
V.~Monga, Y.~Li, and Y.~C. Eldar.
\newblock Algorithm unrolling: Interpretable, efficient deep learning for signal and image processing.
\newblock {\em IEEE Signal Processing Magazine}, 38(2):18--44, 2021.

\bibitem{b34}
S.~Mukherjee, A.~Hauptmann, O.~{\"O}ktem, M.~Pereyra, and C.-B. Sch{\"o}nlieb.
\newblock Learned reconstruction methods with convergence guarantees: a survey of concepts and applications.
\newblock {\em IEEE Signal Processing Magazine}, 40(1):164--182, 2023.

\bibitem{b40}
Y.~Nesterov.
\newblock Smooth minimization of non-smooth functions.
\newblock {\em Mathematical Programming}, 103(1):127--152, Dec 2004.

\bibitem{b30}
Y.~Nesterov.
\newblock {\em Lectures on Convex Optimization}.
\newblock Springer Publishing Company, Incorporated, 2nd edition, 2018.

\bibitem{b28}
T.~Pock and S.~Sabach.
\newblock Inertial proximal alternating linearized minimization (ipalm) for nonconvex and nonsmooth problems.
\newblock {\em SIAM Journal on Imaging Sciences}, 9(4):1756--1787, 2016.

\bibitem{b29}
B.~Polyak.
\newblock Some methods of speeding up the convergence of iteration methods.
\newblock {\em USSR Computational Mathematics and Mathematical Physics}, 4(5):1--17, 1964.

\bibitem{b37}
Y.~Sun, Z.~Wu, X.~Xu, B.~Wohlberg, and U.~S. Kamilov.
\newblock Scalable plug-and-play admm with convergence guarantees.
\newblock {\em IEEE Transactions on Computational Imaging}, 7:849--863, 2021.

\bibitem{b42}
C.~Wang, K.~Shang, H.~Zhang, Q.~Li, Y.~Hui, and S.~K. Zhou.
\newblock Dudotrans: Dual-domain transformer provides more attention for sinogram restoration in sparse-view ct reconstruction, 2021.

\bibitem{b43}
Z.~Wang, A.~Bovik, H.~Sheikh, and E.~Simoncelli.
\newblock Image quality assessment: from error visibility to structural similarity.
\newblock {\em IEEE Transactions on Image Processing}, 13(4):600--612, 2004.

\bibitem{b24}
W.~Wu, D.~Hu, C.~Niu, H.~Yu, V.~Vardhanabhuti, and G.~Wang.
\newblock Drone: Dual-domain residual-based optimization network for sparse-view ct reconstruction.
\newblock {\em IEEE Transactions on Medical Imaging}, 40(11):3002--3014, 2021.

\bibitem{b45}
W.~Xia, Z.~Lu, Y.~Huang, Z.~Shi, Y.~Liu, H.~Chen, Y.~Chen, J.~Zhou, and Y.~Zhang.
\newblock Magic: Manifold and graph integrative convolutional network for low-dose ct reconstruction.
\newblock {\em IEEE Transactions on Medical Imaging}, 40(12):3459--3472, 2021.

\bibitem{b20}
W.~Xia, Z.~Yang, Q.~Zhou, Z.~Lu, Z.~Wang, and Y.~Zhang.
\newblock A transformer-based iterative reconstruction model for sparse-view ct reconstruction.
\newblock In L.~Wang, Q.~Dou, P.~T. Fletcher, S.~Speidel, and S.~Li, editors, {\em Medical Image Computing and Computer Assisted Intervention -- MICCAI 2022}, pages 790--800, Cham, 2022. Springer Nature Switzerland.

\bibitem{b22}
L.~Yang, Z.~Li, R.~Ge, J.~Zhao, H.~Si, and D.~Zhang.
\newblock Low-dose ct denoising via sinogram inner-structure transformer.
\newblock {\em IEEE Transactions on Medical Imaging}, 42(4):910--921, 2023.

\bibitem{b8}
D.~Yarotsky.
\newblock Error bounds for approximations with deep relu networks.
\newblock {\em Neural Networks}, 94:103--114, 2017.

\bibitem{b23}
L.~Yu, Z.~Zhang, X.~Li, and L.~Xing.
\newblock Deep sinogram completion with image prior for metal artifact reduction in ct images.
\newblock {\em IEEE Transactions on Medical Imaging}, 40(1):228--238, 2021.

\bibitem{b11}
J.~Zhang and B.~Ghanem.
\newblock Ista-net: Interpretable optimization-inspired deep network for image compressive sensing.
\newblock In {\em 2018 IEEE/CVF Conference on Computer Vision and Pattern Recognition}, pages 1828--1837, 2018.

\bibitem{b19}
J.~Zhang, Y.~Hu, J.~Yang, Y.~Chen, J.-L. Coatrieux, and L.~Luo.
\newblock Sparse-view x-ray ct reconstruction with gamma regularization.
\newblock {\em Neurocomputing}, 230:251--269, 2017.

\bibitem{b14}
Y.~Zhang, H.~Chen, W.~Xia, Y.~Chen, B.~Liu, Y.~Liu, H.~Sun, and J.~Zhou.
\newblock Learn++: Recurrent dual-domain reconstruction network for compressed sensing ct.
\newblock {\em IEEE Transactions on Radiation and Plasma Medical Sciences}, 7(2):132--142, 2023.

\bibitem{b25}
B.~Zhou, X.~Chen, H.~Xie, S.~K. Zhou, J.~S. Duncan, and C.~Liu.
\newblock Dudoufnet: Dual-domain under-to-fully-complete progressive restoration network for simultaneous metal artifact reduction and low-dose ct reconstruction.
\newblock {\em IEEE Transactions on Medical Imaging}, 41(12):3587--3599, 2022.

\end{thebibliography}

\end{document}